\begin{document}

\title{Learning Graphical Models With Hubs}

\author{\name Kean Ming Tan \email keanming@uw.edu \\
       \addr Department of Biostatistics \\
       	     University of Washington \\
	           Seattle WA, 98195
       \AND
       \name Palma London \email palondon@uw.edu \\
       \name Karthik Mohan \email karna@uw.edu \\
       \addr Department of Electrical Engineering \\
       		University of Washington \\
			Seattle WA, 98195
       \AND
       \name Su-In Lee \email suinlee@cs.washington.edu \\
       \addr Department of Computer Science and  Engineering, Genome Sciences\\
       		University of Washington \\
			Seattle WA, 98195
       \AND
       \name Maryam Fazel \email mfazel@uw.edu \\
       \addr Department of Electrical Engineering \\
       		University of Washington \\
			Seattle WA, 98195
       \AND
       \name Daniela Witten \email dwitten@uw.edu \\
       \addr  Department of Biostatistics\\
       University of Washington\\
      Seattle, WA 98195}
      
\editor{Xiaotong Shen}
\maketitle
\begin{abstract}We consider the problem of learning a high-dimensional graphical model in which there are a few \emph{hub} nodes that are \emph{densely-connected} to many other nodes. Many authors have studied the use of an $\ell_1$ penalty in order to learn a sparse graph in the high-dimensional setting. 
 However, the  $\ell_1$ penalty implicitly assumes that each edge is equally likely and independent of all other edges.  We propose a general framework to accommodate more realistic networks with hub nodes, using a convex formulation that involves a row-column overlap norm penalty.  We apply this general framework to three widely-used probabilistic graphical models: the Gaussian graphical model,  the covariance graph model, and the binary Ising model.  An alternating direction method of multipliers algorithm is used to solve the corresponding convex optimization problems.  On synthetic data, we demonstrate that our proposed framework outperforms competitors that do not explicitly model hub nodes.  We illustrate our proposal on   a webpage data set and a gene expression data set. 
\end{abstract}
\begin{keywords}
Gaussian graphical model, covariance graph, binary network, \emph{lasso}, hub, alternating direction method of multipliers 
\end{keywords}

\section{Introduction}
\label{Sec:Introduction}
Graphical models are  used to model a wide variety of systems, such as gene regulatory networks and social interaction networks.  A graph consists of a set of $p$ nodes, each representing a variable, and a set of edges between pairs of nodes.  The presence of an edge between two nodes indicates a relationship between the two variables.  In this manuscript, we consider two types of graphs: conditional independence graphs and marginal independence graphs.  In a conditional independence graph,   an edge connects a pair of variables if and only if they are conditionally dependent---dependent conditional upon the other variables.    In a marginal independence graph, two nodes are joined by an edge if and only if they are marginally dependent---dependent without conditioning on the other variables.

In recent years, many authors have studied the problem of  learning a graphical model in the high-dimensional setting, in which the number of variables $p$ is larger than the number of observations $n$.  Let $\mathbf{X}$ be a $n\times p$ matrix, with rows  $\mathbf{x}_1,\ldots,\mathbf{x}_n$.  Throughout the rest of the text, we will focus on three specific types of graphical models:
\begin{enumerate}
\item A \emph{Gaussian graphical model}, where $\mathbf{x}_1,\ldots,\mathbf{x}_n \stackrel{\small \mathrm{i.i.d.}}\sim N(\mathbf{0},\mathbf{\Sigma})$.  In this setting, $(\mathbf{\Sigma}^{-1})_{jj'} = 0$ for some $j\ne j'$ if and only if the $j$th and $j'$th variables are conditionally independent \citep{MKB79}; therefore, the sparsity pattern of  $\mathbf{\Sigma}^{-1}$ determines the conditional independence graph.     

\item A \emph{Gaussian covariance graph model}, where $\mathbf{x}_1,\ldots,\mathbf{x}_n \stackrel{\small \mathrm{i.i.d.}}\sim N(\mathbf{0},\mathbf{\Sigma})$.  Then $\Sigma_{jj'}=0$ for some $j\ne j'$ if and only if the $j$th and $j'$th variables are marginally independent.  Therefore, the sparsity pattern of $\mathbf{\Sigma}$ determines the marginal independence graph. 

\item A \emph{binary Ising graphical model}, where $\mathbf{x}_1,\ldots,\mathbf{x}_n$ are i.i.d. with density function 
\[
p(\mathbf{x},\mathbf{\Theta}) = \frac{1}{Z(\mathbf{\Theta})}\exp  \left[ \sum_{j=1}^p \theta_{jj} x_j  +\sum_{1\le j < j' \le p} \theta_{jj'} x_j x_{j'}   \right],
\]
 $\mathbf{\Theta}$ is a $p\times p$ symmetric matrix, and $Z(\mathbf{\Theta})$ is the partition function, which ensures that the density sums to one.  Here, $\mathbf{x}$ is a binary vector, and $\theta_{jj'}=0$ if and only if the $j$th and $j'$th variables are conditionally independent.  
 The sparsity pattern of $\mathbf{\Theta}$ determines the conditional independence graph.  
\end{enumerate}

To construct an interpretable graph when $p>n$, many authors have proposed applying an $\ell_1$ penalty to the parameter encoding each edge, in order to encourage sparsity.
 For instance, such an approach is taken by 
 \citet{YuanLin07}, \citet{SparseInv}, \citet{Rothman08}, and \citet{YuanGlasso08} in the Gaussian graphical model;  \citet{ElKaroui2008}, \citet{BickelandLevina2008}, \citet{RothmanJASA09}, \citet{BienTibs11}, \citet{CaiLiu2011}, and \citet{Xueetal2012} in the covariance graph model; and 
\citet{LeeSIetal2007}, \citet{Hoefling2009}, and \citet{ravikumaretal2010} in the binary model.

 However, applying an $\ell_1$ penalty to each edge can be interpreted as placing an independent double-exponential prior on each edge.  Consequently, such an approach implicitly assumes that each edge is equally likely and independent of all other edges; this corresponds to an Erd\H{o}s-R\'{e}nyi graph in which most nodes have approximately the same number of edges  \citep{erdosrenyi}.  This is unrealistic in many real-world networks, in which we believe that certain nodes (which, unfortunately, are not known \emph{a priori}) have a lot more edges than other nodes.  An example is the network of webpages in the World Wide Web, where a relatively small  number of webpages are connected to many other webpages \citep{barabasi1999}.  A number of authors have shown that real-world networks are \emph{scale-free}, in the sense that the number of edges for each node follows a power-law distribution; examples include gene-regulatory networks, social networks, and networks of collaborations among scientists \citep[among others,][]{barabasi1999,barabasi2009,Liljerosetal2001,Jeongetal2001,Newman2000,li2005towards}.  More recently, \citet{Haoetal2012} have shown that certain genes, referred to as \emph{super hubs}, regulate hundreds of downstream genes in a gene regulatory network, resulting in far denser connections than are typically seen in a scale-free network.

In this paper, we refer to very densely-connected nodes, such as the ``super hubs" considered in \citet{Haoetal2012}, as \emph{hubs}.  When we refer to hubs, we have in mind nodes that are connected to a very substantial number of other nodes in the network---and in particular, we are referring to nodes that are much more densely-connected than even the most highly-connected node in a scale-free network. An example of a network containing hub nodes 
 is shown in Figure~\ref{Fig:ggmtoy}.

Here we propose a convex penalty function for estimating graphs containing hubs. Our formulation simultaneously identifies the hubs and estimates the entire graph.  The penalty function yields a convex optimization problem when combined with a convex loss function.  We consider the application of this hub penalty function in modeling Gaussian graphical models, covariance graph models, and binary Ising models.  Our formulation does not require that we know \emph{a priori} which nodes in the network are hubs.    

In related work, several authors have proposed methods to estimate a scale-free Gaussian graphical model \citep{QiangLiu2011,Defazio2012}.  However, those methods do not model hub nodes---the most highly-connected nodes that arise in a scale-free network are far less connected than the hubs that we consider in our formulation.  Under a different framework, some authors proposed a screening-based procedure to identify hub nodes in the context of Gaussian graphical models \citep{HeroandRajaratnam2012,firouzi2013local}.  Our proposal outperforms such approaches when hub nodes are present (see discussion in Section~\ref{subSec:other proposal}).  
   
   In Figure~\ref{Fig:ggmtoy}, the performance of our proposed approach is shown in a toy example in the context of a Gaussian graphical model. We see that when the true network contains hub nodes (Figure~\ref{Fig:ggmtoy}(a)), our proposed approach (Figure~\ref{Fig:ggmtoy}(b)) is much better able to recover the network than is the graphical lasso (Figure~\ref{Fig:ggmtoy}(c)), a well-studied approach that applies an $\ell_1$ penalty to each edge in the graph \citep{SparseInv}.
   \begin{figure}[htp]
\begin{center}
\includegraphics[scale=0.52]{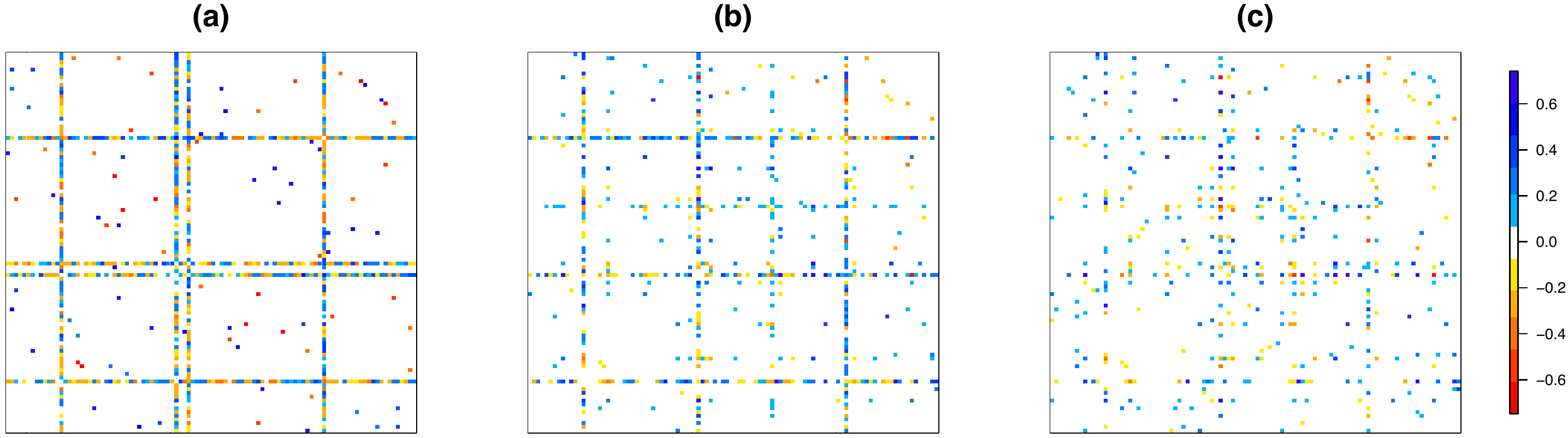}
 \end{center}
  \caption{(a): Heatmap of the inverse covariance matrix in a  toy example of a Gaussian graphical model with four hub nodes. White elements  are zero  and colored elements are non-zero in the inverse covariance matrix. Thus, colored elements correspond to edges in the graph. (b): Estimate from the \emph{hub graphical lasso}, proposed in this paper.  (c): Graphical lasso estimate. }
  \label{Fig:ggmtoy}
\end{figure}

We present the hub penalty function in Section~\ref{Sec:Penalty}.  We then apply it to the Gaussian graphical model, the covariance graph model, and the binary Ising model in Sections~\ref{Sec:GGM}, \ref{Sec:Covariance}, and \ref{Sec:Binary}, respectively.   In Section~\ref{Sec:realdata}, we apply our approach to   a webpage data set and a gene expression data set.     We close with a discussion in Section~\ref{Sec:Discussion}.

\section{The General Formulation}
\label{Sec:Penalty}
In this section, we present a general framework to accommodate network with hub nodes.
\subsection{The Hub Penalty Function}
 Let $\mathbf{X}$ be a $n \times p$ data matrix, $\mathbf{\Theta}$  a $p\times p$ symmetric matrix containing the parameters of interest, and $\ell(\mathbf{X},\mathbf{\Theta})$ a loss function (assumed to be convex in $\bf \Theta$). 
In order to obtain a sparse and interpretable graph estimate, many authors have considered the problem
\begin{eqnarray}
\label{Eq:l1general}
 \underset{{\mathbf{\Theta}\in \mathcal{S}}}{\mathrm{minimize}}
& & \left\{ \ell(\mathbf{X},\mathbf{\Theta})  + \lambda \| \mathbf{\Theta} - \mathrm{diag}(\mathbf{\Theta})\|_1 \right \},
\end{eqnarray}
where $\lambda$ is a non-negative tuning parameter, $\mathcal{S}$ is some set depending on the loss function, and $\|\cdot \|_1$ is the sum of the absolute values of the matrix elements.  For instance, in the case of a Gaussian graphical model, we could take $\ell(\mathbf{X},\mathbf{\Theta}) = -\log \det {\bf \Theta} + \mbox{trace}({\bf S} {\bf \Theta})$, the negative log-likelihood of the data, where $\bf S$ is the empirical covariance matrix and $\mathcal{S}$ is the set of $p\times p$ positive definite matrices. The solution to (\ref{Eq:l1general}) can then be interpreted as an estimate of the inverse covariance matrix. The $\ell_1$ penalty in (\ref{Eq:l1general}) encourages zeros in the solution. But it  typically does not yield an estimate that contains hubs.   

\begin{figure}[htp]
\begin{center}
\includegraphics[scale=0.625]{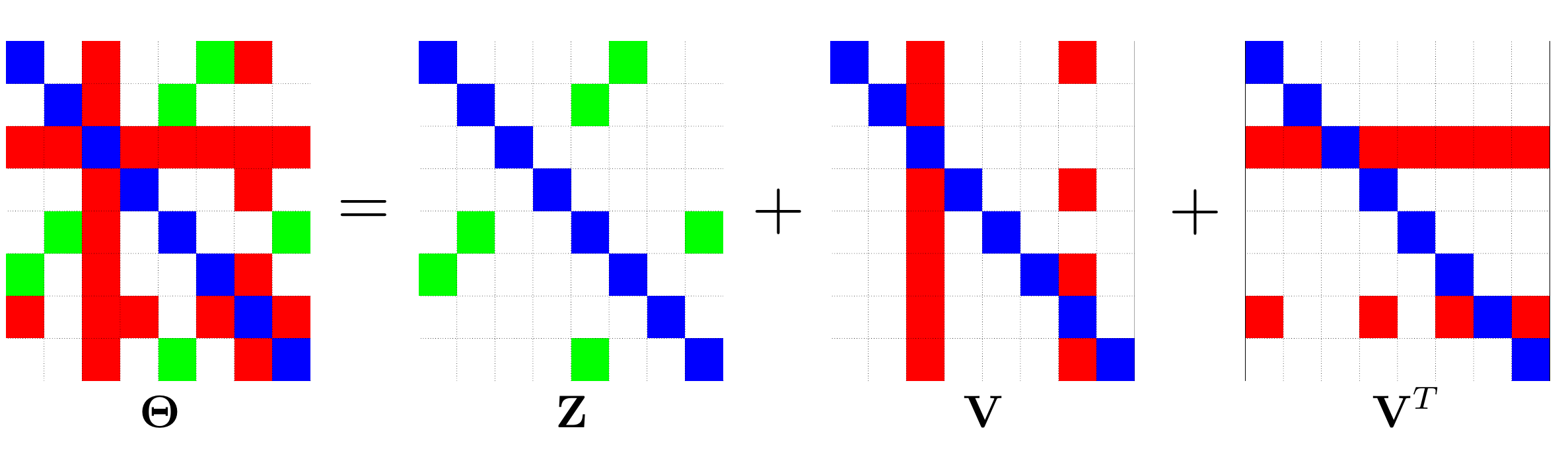}
\end{center}
\caption{Decomposition of a symmetric matrix $\mathbf{\Theta}$ into $\mathbf{Z+V+V}^T$, where $\mathbf{Z}$ is sparse, and most columns of $\mathbf{V}$ are entirely zero.  Blue, white, green, and red elements are  diagonal, zero, non-zero in $\mathbf{Z}$, and non-zero due to two hubs in $\mathbf{V}$, respectively.}
\label{Fig:formulation}
\end{figure}

In order to explicitly model hub nodes in a graph, we wish to replace the $\ell_1$ penalty in (\ref{Eq:l1general}) with a convex penalty that encourages a solution that can be decomposed as $\mathbf{Z}+\mathbf{V}+\mathbf{V}^T$, where $\mathbf{Z}$ is a sparse symmetric  matrix, and $\mathbf{V}$ is a matrix whose columns are either entirely zero or almost entirely non-zero (see Figure \ref{Fig:formulation}).  The sparse elements of $\mathbf{Z}$ represent edges between non-hub nodes, and the non-zero columns of $\mathbf{V}$   correspond to hub nodes. We achieve this goal via the \emph{hub penalty function}, which takes the form 
\begin{equation}
\label{Eq:hubpenalty} \footnotesize
 \text{P}(\mathbf{\Theta}) = \underset{\mathbf{V},\mathbf{Z}: \;\;  \mathbf{\Theta} = \mathbf{V}+\mathbf{V}^T+\mathbf{Z}} {\text{min}} 
  \left \{ \lambda_1 \| \mathbf{Z} - \text{diag}(\mathbf{Z})\|_1 +\lambda_2  \| \mathbf{V} - \text{diag}(\mathbf{V})\|_1 
 +\lambda_3 \sum_{j=1}^p \| (\mathbf{V}  - \text{diag}(\mathbf{V}))_j \|_q \right\}.
\end{equation}
Here $\lambda_1, \lambda_2$, and $\lambda_3$ are nonnegative tuning parameters. Sparsity in $\mathbf{Z}$ is encouraged via the $\ell_1$ penalty on its off-diagonal elements, and is controlled by the value of $\lambda_1$. The  $\ell_1$ and $\ell_1/ \ell_q$ norms on the columns of $\bf V$  induce group sparsity  when $q=2$ \citep{grouplasso,sparsegrouplasso}; $\lambda_3$ controls the selection of hub nodes, and $\lambda_2$ controls the sparsity of each hub node's connections to other nodes. 
The convex penalty (\ref{Eq:hubpenalty}) can be combined with $\ell({\bf X}, {\bf \Theta})$ to yield the convex optimization problem
\begin{eqnarray}
\label{Eq:general}
 \underset{{\mathbf{\Theta}\in \mathcal{S}, \mathbf{V, Z}}} {\text{minimize}}
& & \Bigg\{ \ell(\mathbf{X},\mathbf{\Theta})  + \lambda_1 \| \mathbf{Z} - \text{diag}(\mathbf{Z})\|_1 +\lambda_2  \| \mathbf{V} - \text{diag}(\mathbf{V})\|_1 \nonumber \\
&&+ \lambda_3 \sum_{j=1}^p \| (\mathbf{V}  - \text{diag}(\mathbf{V}))_j \|_q \Bigg\} \;\; \text{subject to} \;\;   \mathbf{\Theta} = \mathbf{V}+\mathbf{V}^T+\mathbf{Z},
\end{eqnarray}
where the set $\mathcal{S}$ depends on the loss function $\ell(\mathbf{X},\mathbf{\Theta})$.

Note that when $\lambda_2\rightarrow \infty$ or $\lambda_3 \rightarrow \infty$, then (\ref{Eq:general}) reduces to  (\ref{Eq:l1general}).   In this paper, we take $q=2$, which leads to estimation of a network containing dense hub nodes.  Other values of $q$  such as $q=\infty$ are also possible 
   \citep[see, e.g.,][]{Mohanetal2013}. We note that the hub penalty function is closely related to recent work on overlapping group lasso penalties in the context of learning multiple sparse precision matrices \citep{Mohanetal2013}.

\subsection{Algorithm}
In order to solve  (\ref{Eq:general}) with $q=2$, we use an \emph{alternating direction method of multipliers} (ADMM) algorithm \citep[see, e.g.,][]{EcksteinADMM92,BoydADMM,ADMMconvergence}.  ADMM is an attractive algorithm for this problem, as it allows us to decouple some of the terms in (\ref{Eq:general}) that are difficult to optimize jointly.  In order to develop an ADMM algorithm for (\ref{Eq:general}) with guaranteed convergence, we reformulate it as 
 a consensus problem, as in \citet{Maetal2013ADMM}. The convergence of the algorithm to the optimal solution follows from classical results \citep[see, e.g., the review papers][]{BoydADMM,ADMMconvergence}.  

\begin{algorithm}[htp]
\small
\caption{ADMM Algorithm for Solving (\ref{Eq:general}).}
\label{Alg:general}
\begin{enumerate}
\item  \textbf{Initialize} the parameters: 
\begin{enumerate}
\item primal variables $\mathbf{\Theta,V,Z}, \tilde{\mathbf{\Theta}},\tilde{\mathbf{V}}$, and $\tilde{\mathbf{Z}}$ to the $p \times p$ identity matrix.
\item dual variables $\mathbf{W}_1,\mathbf{W}_2$, and $\mathbf{W}_3$ to the $p \times p$ zero matrix.
\item  constants $\rho>0$ and $\tau>0$.\\
 
\end{enumerate}
 
\item  \textbf{Iterate} until the stopping criterion $\frac{\| {\mathbf{\Theta}}_{t}- {\mathbf{\Theta}}_{t-1} \|_F^2}{\| {\mathbf{\Theta}}_{t-1}\|_F^2} \le \tau$ is met, where ${\bf \Theta}_t$ is the value of $\bf \Theta$ obtained at the $t$th iteration:

\begin{enumerate}
\item Update ${\bf \Theta}, {\bf V}, {\bf Z}$:
\begin{enumerate}
\item $\mathbf{\Theta}= \underset{\mathbf{\Theta}\in \mathcal{S}}{\arg \min} \left \{  \ell(\mathbf{X},\mathbf{\Theta}) + \frac{\rho}{2} \|\mathbf{\Theta}-\tilde{\mathbf{\Theta}}+\mathbf{W}_{1}\|_F^2 \right \}$.  
\item $\mathbf{Z}= S(\tilde{\bf Z} - \mathbf{W}_3, \frac{\lambda_1}{\rho})$,
diag$\mathbf{(Z)}= \text{diag}(\tilde{\mathbf{Z}}-\mathbf{W}_3)$. Here $S$ denotes the soft-thresholding operator, applied element-wise to a matrix:  $S(A_{ij},b) = \text{sign}(A_{ij}) \max( |A_{ij}|-b, 0)$.

\item $\mathbf{C}= \tilde{\mathbf{V}}-\mathbf{W}_2-\text{diag}(\tilde{\mathbf{V}}-\mathbf{W}_2)$.
\item  $\mathbf{V}_j= \max \left(1-\frac{\lambda_3}{\rho \|S(\mathbf{C}_j,\lambda_2/\rho) \|_2}, 0 \right) \cdot S(\mathbf{C}_j, \lambda_2/\rho)$ for $j=1, \ldots, p$. 
\item diag$(\mathbf{V}) = \text{diag}(\tilde{\mathbf{V}}-\mathbf{W}_2)$.
\end{enumerate}

\item Update   $\tilde{\bf \Theta}, \tilde{\bf V}, \tilde{\bf Z}$:
\begin{enumerate}
\item $\mathbf{\Gamma} = \frac{\rho}{6}\left[ ({\mathbf{\Theta+W}_1}) - (\mathbf{V+W}_2) -(\mathbf{V+W}_2)^T - (\mathbf{Z+W}_3)   \right]$.

\item $\tilde{\mathbf{\Theta}}= {\bf \Theta + W}_1 - \frac{1}{\rho}\mathbf{\Gamma}$;   \;  \;\;  iii. $\tilde{\mathbf{V}} = \frac{1}{\rho} (\mathbf{\Gamma+\Gamma}^T)  +\mathbf{V+W}_2$;   \;  \;\;  iv. $\tilde{\mathbf{Z}} = \frac{1}{\rho} \mathbf{\Gamma} + {\bf Z+W}_3$.

\end{enumerate}

\item Update  $\mathbf{W}_1, {\bf W}_2, {\bf W}_3$:
\begin{enumerate}
\item $\mathbf{W}_1 = \mathbf{W}_1 + \mathbf{\Theta}-\tilde{\bf \Theta}$;   \;  \;\;  ii. $\mathbf{W}_2 = \mathbf{W}_2 + \mathbf{V}-\tilde{\bf V}$;   \;  \;\;  iii. $\mathbf{W}_3 = \mathbf{W}_3 + \mathbf{Z}-\tilde{\bf Z}$.
\end{enumerate}

\end{enumerate}
\end{enumerate}
\end{algorithm}

 In greater detail, we let $\mathbf{B}=(\mathbf{\Theta},\mathbf{V},\mathbf{Z})$, $\tilde{\mathbf{B}}=(\tilde{\mathbf{\Theta}},\tilde{\mathbf{V}},\tilde{\mathbf{Z}})$, 
\[
f(\mathbf{B}) = \ell(\mathbf{X,\Theta}) + \lambda_1 \|\mathbf{Z}-\text{diag}(\mathbf{Z}) \|_1+ \lambda_2 \|\mathbf{V}-\text{diag}(\mathbf{V}) \|_1+ \lambda_3 \sum_{j=1}^p \| (\mathbf{V}-\text{diag}(\mathbf{V})) \|_2,
\]
and
\[
g(\tilde{\mathbf{B}})=\begin{cases} 0 & \text{if } \tilde{\mathbf{\Theta}} = \tilde{\mathbf{V}}+\tilde{\mathbf{V}}^T + \tilde{\mathbf{Z}}  \\ \infty & \text{otherwise}.\end{cases}
\]
\noindent Then, we can rewrite (\ref{Eq:general}) as 
\begin{equation}
\label{Eq:reformulate}
\underset{\mathbf{B},\tilde{\mathbf{B}}}{\text{minimize  }} \left\{f(\mathbf{B})+g(\tilde{\mathbf{B}})\right\} \qquad \text{subject to } \mathbf{B}=\tilde{\mathbf{B}}.
\end{equation}

\noindent  The scaled augmented Lagrangian for (\ref{Eq:reformulate}) takes the form 
 \begin{equation*}
\begin{split}
L(\mathbf{B},\tilde{\mathbf{B}},\mathbf{W}) &= \ell(\mathbf{X},\mathbf{\Theta})    + \lambda_1 \| \mathbf{Z} - \text{diag}(\mathbf{Z})\|_1+ \lambda_2  \| \mathbf{V} - \text{diag}(\mathbf{V})\|_1 \\
&+\lambda_3 \sum_{j=1}^p \| (\mathbf{V}  - \text{diag}(\mathbf{V}))_j \|_2 +g(\tilde{\mathbf{B}})  +\frac{\rho}{2}\|\mathbf{B}-\tilde{\mathbf{B}}+\mathbf{W}  \|^2_F,\\
\end{split}
\end{equation*}
where $\mathbf{B}$ and $\tilde{\mathbf{B}}$ are the primal variables, and ${\bf W}=({\bf W}_1, {\bf W}_2, {\bf W}_3 )$ is the dual variable.  
 Note that the scaled augmented Lagrangian can be derived from the usual Lagrangian by adding a quadratic term and completing the square \citep{BoydADMM}.  
 
  A general algorithm for solving
 (\ref{Eq:general})   is provided in Algorithm \ref{Alg:general}.  The derivation is in Appendix A.  Note that only the update for $\mathbf{\Theta}$ (Step 2(a)i) depends on the form of the convex loss function $\ell(\mathbf{X},\mathbf{\Theta})$. In the following sections, we consider special cases of (\ref{Eq:general}) that lead to estimation of  Gaussian graphical models, covariance graph models, and binary networks with hub nodes.          

\section{The Hub Graphical Lasso}
\label{Sec:GGM}
 Assume that $\mathbf{x}_1,\ldots,\mathbf{x}_n \stackrel{\small\mathrm{i.i.d.}}\sim N(\mathbf{0},\mathbf{\Sigma})$.   The well-known \emph{graphical lasso} problem   \citep[see, e.g.,][]{SparseInv} takes the form of (\ref{Eq:l1general}) with $\ell(\mathbf{X},\mathbf{\Theta}) = -\log \det {\bf \Theta} + \mbox{trace}({\bf S} {\bf \Theta})$, and $\bf S$ the empirical covariance matrix of $\mathbf{X}$:
 \begin{equation}
\label{Eq:l1penalizeggm}
\underset{\mathbf{\Theta}\in \mathcal{S}}{\text{minimize}} \quad \left\{-\log \det  \mathbf{\Theta} + \text{trace}(\mathbf{S\Theta}) + \lambda \sum_{j\ne j'} |{\Theta}_{jj'}| \right\},
\end{equation}
where $\mathcal{S} = \{\mathbf{\Theta}: \mathbf{\Theta} \succ 0 \text{ and } \mathbf{\Theta}=\mathbf{\Theta}^T   \}$.    The   solution to this optimization problem serves as an estimate for ${\bf \Sigma}^{-1}$. We now use the hub penalty function to extend the graphical lasso in order to accommodate hub nodes.

\subsection{Formulation and Algorithm}
\label{GGM:formulation}
We propose the \emph{hub graphical lasso} (HGL) optimization problem, which  takes the form
\begin{eqnarray}
\label{Eq:ggmhub}
 \underset{{\mathbf{\Theta}}\in \mathcal{S}} {\text{minimize}}
& & \left \{ -\log \det  \mathbf{\Theta} + \text{trace}(\mathbf{S\Theta})  + \text{P}(\mathbf{\Theta}) \right\}.
\end{eqnarray}
Again, $\mathcal{S} = \{\mathbf{\Theta}:\mathbf{\Theta} \succ 0 \text{ and } \mathbf{\Theta}=\mathbf{\Theta}^T \}$.
\noindent It encourages a solution that contains hub nodes, as well as edges that connect non-hubs (Figure~\ref{Fig:ggmtoy}).
Problem (\ref{Eq:ggmhub}) can be solved using Algorithm \ref{Alg:general}.
The update for $\mathbf{\Theta}$ in Algorithm \ref{Alg:general} (Step 2(a)i) can be derived by minimizing 
\begin{equation}
-\log \det  \mathbf{\Theta} + \text{trace}(\mathbf{S\Theta}) + \frac{\rho}{2}\| \mathbf{\Theta} - \tilde{\mathbf{\Theta}} + \mathbf{W}_1 \|_F^2
\end{equation}
with respect to $\mathbf{\Theta}$ (note that the constraint $\mathbf{\Theta} \in \mathcal{S}$ in (\ref{Eq:ggmhub}) is treated as an implicit constraint, due to the domain of definition of the $\log \det$ function).  This can be shown to have the solution
\[
{\mathbf{\Theta}}=\frac{1}{2}\mathbf{U}\left( \mathbf{D}+ \sqrt{\mathbf{D}^2+\frac{4}{\rho}\mathbf{I}} \right) \mathbf{U}^T,
\]
where $\mathbf{UDU}^T$ denotes the eigen-decomposition of $\tilde{\mathbf{\Theta}}-\mathbf{W}_1 -\frac{1}{\rho}\mathbf{S}$.

The complexity of the ADMM algorithm for HGL is $O(p^3)$ per iteration; this is the complexity of the eigen-decomposition for updating $\mathbf{\Theta}$.   We now briefly compare the computational time for the ADMM algorithm for solving (\ref{Eq:ggmhub}) to that of an interior point method (using the solver \verb=Sedumi= called from \verb=cvx=).  On a 1.86 GHz Intel Core 2 Duo machine, the interior point method  takes $\sim 3$ minutes, while ADMM takes only 1 second, on a data set with $p=30$.  We present a more extensive run time study for the ADMM algorithm for HGL in Appendix E.  

\subsection{Conditions for HGL Solution to be Block Diagonal}
\label{GGM:blockdiagonal}
In order to reduce computations for solving the HGL problem, we now present a necessary condition and a sufficient condition for the HGL solution to be block diagonal, subject to some permutation of the rows and columns.  The conditions depend only on the tuning parameters $\lambda_1$, $\lambda_2$, and $\lambda_3$.   These conditions build upon similar results in the context of Gaussian graphical models from the recent literature \citep[see, e.g.,][]{WittenFriedman11,MazumderHastie11,yang2012fused,Danaheretal2012,Mohanetal2013}.
Let $C_1, C_2, \ldots, C_K$ denote a partition of the $p$ features.
\begin{theorem}
 A sufficient condition for the  HGL solution to be block diagonal with blocks given by $C_1,C_2,\ldots,C_K$  is that $\min \left\{\lambda_1, \frac{\lambda_2}{2} \right\} > |S_{jj'}|$  for all $j\in C_k, j' \in C_{k'}, k\ne k'$.
\label{Theorem:BD}
\end{theorem}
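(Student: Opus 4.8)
The plan is to exhibit an optimal solution with the desired block structure by constructing a block-diagonal candidate and verifying that it satisfies the (necessary and sufficient) optimality conditions of the convex problem~(\ref{Eq:ggmhub}). Concretely, I would let $\hat{\mathbf{\Theta}}^{(k)},\hat{\mathbf{V}}^{(k)},\hat{\mathbf{Z}}^{(k)}$ be minimizers of the HGL problem restricted to the submatrix $\mathbf{S}_{C_k C_k}$, and assemble block-diagonal matrices $\hat{\mathbf{\Theta}},\hat{\mathbf{V}},\hat{\mathbf{Z}}$ whose $k$th diagonal block is the corresponding per-block solution and whose off-block entries all vanish. Because $-\log\det\mathbf{\Theta}$ is strictly convex, the minimizer $\hat{\mathbf{\Theta}}$ of~(\ref{Eq:ggmhub}) is unique, so it suffices to show that this particular block-diagonal candidate is optimal; the block diagonality of $\hat{\mathbf{\Theta}}$ then follows. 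Note also that each per-block $\hat{\mathbf{\Theta}}^{(k)}\succ 0$, so the assembled $\hat{\mathbf{\Theta}}\in\mathcal{S}$.

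Next I would write down the optimality conditions for~(\ref{Eq:ggmhub}) in the split form over $(\mathbf{\Theta},\mathbf{V},\mathbf{Z})$ with the linear constraint $\mathbf{\Theta}=\mathbf{V}+\mathbf{V}^T+\mathbf{Z}$. Introducing a symmetric multiplier $\mathbf{\Lambda}$ for this constraint, stationarity in $\mathbf{\Theta}$ gives $\mathbf{\Lambda}=\mathbf{S}-\mathbf{\Theta}^{-1}$; stationarity in $\mathbf{Z}$ gives $\mathbf{\Lambda}+\lambda_1\mathbf{G}_1=\mathbf{0}$ on off-diagonal entries, where $\mathbf{G}_1$ is a subgradient of $\|\mathbf{Z}-\mathrm{diag}(\mathbf{Z})\|_1$; and stationarity in $\mathbf{V}$ gives $2\mathbf{\Lambda}+\lambda_2\mathbf{G}_2+\lambda_3\mathbf{G}_3=\mathbf{0}$ on off-diagonal entries, where $\mathbf{G}_2$ is a subgradient of $\|\mathbf{V}-\mathrm{diag}(\mathbf{V})\|_1$ and $\mathbf{G}_3$ is a subgradient of the column penalty $\sum_j\|(\mathbf{V}-\mathrm{diag}(\mathbf{V}))_j\|_q$. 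Since $\hat{\mathbf{\Theta}}$ is block diagonal, so is $\hat{\mathbf{\Theta}}^{-1}$, and hence for any off-block pair $j\in C_k,\,j'\in C_{k'}$ with $k\ne k'$ we have $\mathbf{\Lambda}_{jj'}=S_{jj'}$.

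The core of the argument is to verify that valid subgradients can be chosen on the off-block entries, where $\hat{\mathbf{Z}}_{jj'}=0$ and $\hat{\mathbf{V}}_{jj'}=0$. From the $\mathbf{Z}$-condition, $S_{jj'}=-\lambda_1(\mathbf{G}_1)_{jj'}$ is solvable with $|(\mathbf{G}_1)_{jj'}|\le 1$ precisely when $|S_{jj'}|\le\lambda_1$, and the strict inequality $|S_{jj'}|<\lambda_1$ places $(\mathbf{G}_1)_{jj'}$ in the interior of $[-1,1]$. For the $\mathbf{V}$-condition, the key observation is that the column subgradient contributes nothing off-block: if the $j'$th column of $\hat{\mathbf{V}}$ is nonzero, its subgradient is $(\hat{\mathbf{V}})_{j'}/\|(\hat{\mathbf{V}})_{j'}\|_q$ (taking $q=2$), whose $j$th entry is zero because $\hat{\mathbf{V}}_{jj'}=0$; if the column is entirely zero, one may take $(\mathbf{G}_3)_{jj'}=0$ outright. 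In either case the condition reduces to $2S_{jj'}=-\lambda_2(\mathbf{G}_2)_{jj'}$, solvable with $|(\mathbf{G}_2)_{jj'}|\le 1$ exactly when $|S_{jj'}|\le\lambda_2/2$. Combining the two requirements yields the sufficient condition $|S_{jj'}|<\min\{\lambda_1,\lambda_2/2\}$ for all off-block pairs, which is the statement of the theorem.

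I expect the main obstacle to be the careful bookkeeping of the column/group-norm subgradient $\mathbf{G}_3$ and the verification that the subgradients chosen on the off-block entries are globally consistent with the per-block optimality conditions. Because the off-block penalty terms involve only entries that are zero in the candidate, the off-block components of $\mathbf{G}_1,\mathbf{G}_2,\mathbf{G}_3$ do not enter the per-block stationarity relations and may be assigned freely, while the within-block components are inherited from the per-block solutions; confirming this decoupling---together with the fact that $\hat{\mathbf{\Theta}}^{-1}$ is exactly block diagonal so that the smooth gradient reduces to $S_{jj'}$ off-block---is what makes the construction go through.
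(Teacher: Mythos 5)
Your proposal is correct, but it takes a genuinely different route from the paper. You argue by constructing a block-diagonal candidate from the per-block solutions and verifying a dual certificate: you write the KKT conditions of the joint problem in $(\mathbf{\Theta},\mathbf{V},\mathbf{Z})$, observe that block diagonality of $\hat{\mathbf{\Theta}}$ forces the multiplier to equal $S_{jj'}$ on off-block entries, and check that admissible subgradients exist there (the key point being that the group-norm subgradient $\mathbf{G}_3$ can be taken to vanish off-block, whether the relevant column of $\hat{\mathbf{V}}$ is zero or not, so the $\mathbf{V}$-condition reduces to $2|S_{jj'}|\le\lambda_2$ and the $\mathbf{Z}$-condition to $|S_{jj'}|\le\lambda_1$). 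The paper instead gives a primal comparison argument: for any feasible $(\mathbf{\Theta},\mathbf{V},\mathbf{Z})$ not supported on $\mathcal{T}=\cup_k\{C_k\times C_k\}$, the restriction $(\mathbf{\Theta}_{\mathcal{T}},\mathbf{V}_{\mathcal{T}},\mathbf{Z}_{\mathcal{T}})$ is feasible and strictly decreases the objective, using Fischer's inequality for the $\log\det$ term and the bound $|\langle\mathbf{\Theta}_{\mathcal{T}^c},\mathbf{S}\rangle|\le(2\|\mathbf{V}_{\mathcal{T}^c}\|_1+\|\mathbf{Z}_{\mathcal{T}^c}\|_1)S_{\max}$ for the rest. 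The paper's route is more elementary (no multipliers or subgradient bookkeeping) and shows directly that the entire decomposition $(\mathbf{V},\mathbf{Z})$, not only $\mathbf{\Theta}$, must be supported on $\mathcal{T}$; your route requires the care you anticipate with $\mathbf{G}_3$ and an appeal to uniqueness of $\hat{\mathbf{\Theta}}$, but it buys something too: it establishes constructively that the assembled per-block solutions \emph{are} the global solution (which is exactly what justifies the divide-and-conquer speedup described after the theorem), it shows the condition suffices even with non-strict inequality, and it is the natural companion to the dual argument the paper uses for the necessary condition in Theorem~\ref{Theorem:BD2}.
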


\begin{theorem}
 A necessary condition for the  HGL solution to be block diagonal with blocks given by $C_1,C_2,\ldots,C_K$  is that $\min\left\{\lambda_1, \frac{\lambda_2+\lambda_3}{2}\right\} > |S_{jj'}|$  for all $j\in C_k, j' \in C_{k'}, k\ne k'$.
\label{Theorem:BD2}
\end{theorem}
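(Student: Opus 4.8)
The plan is to characterize the optimum through its subgradient (KKT) conditions and then read off the cross-block constraints forced by block diagonality. First I would introduce a Lagrange multiplier matrix $\mathbf{\Lambda}$ for the constraint $\mathbf{\Theta}=\mathbf{V}+\mathbf{V}^T+\mathbf{Z}$ and write stationarity for the joint problem in $(\mathbf{\Theta},\mathbf{V},\mathbf{Z})$. Stationarity in $\mathbf{\Theta}$ gives $\mathbf{\Lambda}=\mathbf{\Theta}^{-1}-\mathbf{S}$, using $\nabla_{\mathbf{\Theta}}[-\log\det\mathbf{\Theta}+\mathrm{trace}(\mathbf{S\Theta})]=-\mathbf{\Theta}^{-1}+\mathbf{S}$; since $\mathbf{\Theta}^{-1}$ and $\mathbf{S}$ are symmetric, $\mathbf{\Lambda}$ is symmetric. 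Stationarity in the off-diagonal entries of $\mathbf{Z}$ gives $\Lambda_{jj'}=\lambda_1\Gamma^Z_{jj'}$ with $\Gamma^Z_{jj'}\in\partial|Z_{jj'}|$, and stationarity in the off-diagonal entries of $\mathbf{V}$ gives $2\Lambda_{jj'}=\lambda_2\Gamma^V_{jj'}+\lambda_3\Xi_{jj'}$, where $\Gamma^V_{jj'}\in\partial|V_{jj'}|$ and $\Xi$ is a subgradient of the $\ell_1/\ell_2$ column penalty $\sum_j\|(\mathbf{V}-\mathrm{diag}(\mathbf{V}))_j\|_2$ (the factor of two arising because both $\mathbf{V}$ and $\mathbf{V}^T$ appear in the constraint).

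Next I would exploit the assumed structure. If $\hat{\mathbf{\Theta}}$ is block diagonal with blocks $C_1,\dots,C_K$, then so is $\hat{\mathbf{\Theta}}^{-1}$, so for any cross-block pair $j\in C_k$, $j'\in C_{k'}$ with $k\ne k'$ we have $(\hat{\mathbf{\Theta}}^{-1})_{jj'}=0$ and hence $\Lambda_{jj'}=-S_{jj'}$. I would also argue that the decomposition $(\hat{\mathbf{V}},\hat{\mathbf{Z}})$ attaining $\mathrm{P}(\hat{\mathbf{\Theta}})$ may be taken block diagonal: zeroing the cross-block entries of $\mathbf{V}$ and $\mathbf{Z}$ keeps $\hat{\mathbf{\Theta}}=\hat{\mathbf{V}}+\hat{\mathbf{V}}^T+\hat{\mathbf{Z}}$ satisfied (all three cross-block entries are then $0$) and does not increase any of the three penalty terms, so it remains optimal. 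Consequently $\hat{Z}_{jj'}=0$ and $\hat{V}_{jj'}=0$ across blocks, so the relevant subgradients are those of $|\cdot|$ at $0$, namely $\Gamma^Z_{jj'},\Gamma^V_{jj'}\in[-1,1]$, while the group subgradient always satisfies $|\Xi_{jj'}|\le\|\Xi_{\cdot j}\|_2\le 1$ (it equals the normalized column when the column is nonzero and lies in the unit ball otherwise).

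I would then read off the two bounds. From $\lambda_1\Gamma^Z_{jj'}=\Lambda_{jj'}=-S_{jj'}$ and $|\Gamma^Z_{jj'}|\le1$ I get $|S_{jj'}|\le\lambda_1$. From $\lambda_2\Gamma^V_{jj'}+\lambda_3\Xi_{jj'}=2\Lambda_{jj'}=-2S_{jj'}$, the triangle inequality with $|\Gamma^V_{jj'}|\le1$ and $|\Xi_{jj'}|\le1$ yields $2|S_{jj'}|\le\lambda_2+\lambda_3$, i.e. $|S_{jj'}|\le(\lambda_2+\lambda_3)/2$. Combining gives $|S_{jj'}|\le\min\{\lambda_1,(\lambda_2+\lambda_3)/2\}$ for every cross-block pair, which is the asserted condition. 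It is instructive that this is exactly where the argument diverges from the sufficient condition in Theorem~\ref{Theorem:BD}: there one must \emph{construct} valid subgradients and cannot rely on the group term, which is forced to vanish at a zero entry of an active column, leaving only $\lambda_2$ and hence the smaller threshold $\lambda_2/2$; here we only need the generous bound $|\Xi_{jj'}|\le1$, which always holds.

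The main obstacle I anticipate is the careful bookkeeping of the subdifferential of the $\ell_1/\ell_2$ column penalty and its interaction with the shared multiplier $\mathbf{\Lambda}$—in particular, justifying the bound $|\Xi_{jj'}|\le1$ uniformly over active and inactive columns, and the factor-of-two relation coming from $\mathbf{V}+\mathbf{V}^T$. A secondary point is justifying that $(\hat{\mathbf{V}},\hat{\mathbf{Z}})$ may be chosen block diagonal, which I would handle by the zeroing/feasibility argument above. Finally, the subgradient computation delivers the non-strict inequality; I would recover the stated strict form by invoking the strict convexity of $-\log\det$, which makes $\hat{\mathbf{\Theta}}$ unique, together with the requirement that $C_1,\dots,C_K$ be exactly the connected components of the solution rather than a coarser grouping.
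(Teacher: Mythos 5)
Your proof is correct and follows essentially the same route as the paper: both identify the multiplier $\mathbf{\Lambda} = \hat{\mathbf{\Theta}}^{-1} - \mathbf{S}$ from stationarity in $\mathbf{\Theta}$, use the fact that block diagonality of $\hat{\mathbf{\Theta}}$ forces $(\hat{\mathbf{\Theta}}^{-1})_{jj'}=0$ and hence $\Lambda_{jj'} = -S_{jj'}$ across blocks, and then bound $|\Lambda_{jj'}|\le \lambda_1$ and $|\Lambda_{jj'}+\Lambda_{j'j}|\le \lambda_2+\lambda_3$ via the subgradients of the penalty (the paper packages these bounds as a dual-norm representation of $\tilde{\mathbf{P}}$, which is equivalent to your entrywise stationarity conditions in $\mathbf{Z}$ and $\mathbf{V}$). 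Your closing concern about recovering the strict inequality is not a defect relative to the paper, whose own proof likewise only establishes the non-strict bounds $\lambda_1 \ge S_{\max}$ and $\lambda_2+\lambda_3 \ge 2S_{\max}$.
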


Theorem~\ref{Theorem:BD} implies that one can screen the empirical covariance matrix $\mathbf{S}$ to check if the HGL solution is block diagonal \citep[using standard algorithms for identifying the connected components of an undirected graph; see, e.g.,][]{tarjan1972depth}.  Suppose that the HGL solution is block diagonal with $K$ blocks, containing $p_1,\ldots,p_K$ features, and $\sum_{k=1}^K p_k = p$.  Then, one can simply solve the HGL problem on the features within each block separately.  Recall that the bottleneck of the HGL algorithm is the eigen-decomposition for updating $\mathbf{\Theta}$.  The block diagonal condition leads to massive computational speed-ups for implementing the HGL algorithm: instead of computing an eigen-decomposition for a $p\times p$ matrix in each iteration of the HGL algorithm, we compute the eigen-decomposition of $K$ matrices of dimensions $p_1\times p_1,\ldots,p_K \times p_K$.  The computational complexity per-iteration is reduced from $O(p^3)$ to $\sum_{k=1}^K O(p_k^3)$.   

We illustrate the reduction in computational time due to these results in an example with $p=500$.  Without exploiting Theorem~\ref{Theorem:BD}, the ADMM algorithm for HGL  (with a particular value of $\lambda$) takes 159 seconds; in contrast, it takes only 22 seconds when Theorem~\ref{Theorem:BD} is applied.  The estimated precision matrix has 107 connected components, the largest of which contains 212 nodes.

\subsection{Some Properties of  HGL}
\label{GGM:properties}

We now present several properties of the HGL optimization problem (\ref{Eq:ggmhub}), which can be used to provide guidance on the suitable range for the tuning parameters $\lambda_1$, $\lambda_2$, and $\lambda_3$.  In what follows, ${\bf Z}^*$ and ${\bf V}^*$ denote the optimal solutions for $\bf Z$ and $\bf V$ in (\ref{Eq:ggmhub}). Let $\frac{1}{s}+\frac{1}{q} =1$ (recall that $q$ appears in (\ref{Eq:hubpenalty})).

\begin{lemma}
\label{Lemma:DiagonalZ}
A sufficient condition for $\mathbf{Z}^*$ to be a diagonal matrix is that $\lambda_1 > \frac{\lambda_2+\lambda_3}{2}$.
\end{lemma}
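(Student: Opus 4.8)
The plan is to argue by contradiction using an exchange (or ``re-routing'') argument: I will show that if $\lambda_1 > (\lambda_2+\lambda_3)/2$, then any decomposition in which $\mathbf{Z}$ has a nonzero off-diagonal entry is strictly suboptimal, so the minimizing $\mathbf{Z}^*$ must be diagonal. Since moving mass between $\mathbf{V}$ and $\mathbf{Z}$ while preserving $\mathbf{\Theta}=\mathbf{V}+\mathbf{V}^T+\mathbf{Z}$ leaves the loss $\ell(\mathbf{X},\mathbf{\Theta})$ untouched, it suffices to compare only the three penalty terms in (\ref{Eq:hubpenalty}).

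Concretely, suppose $(\mathbf{V}^*,\mathbf{Z}^*)$ attains the minimum in (\ref{Eq:hubpenalty}) and that $\mathbf{Z}^*-\mathrm{diag}(\mathbf{Z}^*)\ne \mathbf{0}$. Because $\mathbf{\Theta}$ is symmetric, $\mathbf{Z}^*$ is symmetric, so I can write its off-diagonal part as $M+M^T$, where $M$ is the strictly upper-triangular matrix with $M_{jj'}=Z^*_{jj'}$ for $j<j'$ and zeros elsewhere. I then define a competitor $\hat{\mathbf{Z}}=\mathrm{diag}(\mathbf{Z}^*)$ and $\hat{\mathbf{V}}=\mathbf{V}^*+M$. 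This is feasible, since $\hat{\mathbf{V}}+\hat{\mathbf{V}}^T+\hat{\mathbf{Z}} = \mathbf{V}^*+(\mathbf{V}^*)^T+(M+M^T)+\mathrm{diag}(\mathbf{Z}^*) = \mathbf{V}^*+(\mathbf{V}^*)^T+\mathbf{Z}^* = \mathbf{\Theta}$, and $\hat{\mathbf{Z}}$ is diagonal by construction. The idea is that an off-diagonal edge that $\mathbf{Z}$ pays for twice (in both the $(j,j')$ and $(j',j)$ slots of the symmetric penalty $\|\mathbf{Z}-\mathrm{diag}(\mathbf{Z})\|_1$) can instead be carried by a single entry of $\mathbf{V}$.

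It remains to tally the change in penalty. Writing $\|M\|_1=\sum_{j<j'}|Z^*_{jj'}|$, the $\mathbf{Z}$-penalty drops by exactly $2\lambda_1\|M\|_1$, since $\hat{\mathbf{Z}}$ contributes nothing off the diagonal whereas $\mathbf{Z}^*$ contributed $\lambda_1\sum_{j\ne j'}|Z^*_{jj'}|=2\lambda_1\|M\|_1$. For the two $\mathbf{V}$-terms I use the triangle inequality: the $\ell_1$ term grows by at most $\lambda_2\|M\|_1$, and, applying the triangle inequality column-by-column together with $\|\cdot\|_q\le\|\cdot\|_1$, the $\ell_1/\ell_q$ term grows by at most $\lambda_3\sum_{j'}\|(M)_{j'}\|_q\le\lambda_3\|M\|_1$. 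Hence the total penalty changes by at most $(\lambda_2+\lambda_3-2\lambda_1)\|M\|_1$, which is strictly negative when $\lambda_1>(\lambda_2+\lambda_3)/2$ and $M\ne\mathbf{0}$ --- contradicting optimality of $(\mathbf{V}^*,\mathbf{Z}^*)$. I expect the main obstacle to be purely bookkeeping: correctly tracking the factor of $2$ coming from the symmetry of $\mathbf{Z}$ (which is exactly what produces the $2\lambda_1$ in the threshold) and bounding the group-norm term, for which the key facts are the triangle inequality for $\|\cdot\|_q$ and the elementary inequality $\|x\|_q\le\|x\|_1$ for $q\ge 1$.
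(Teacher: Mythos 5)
Your proof is correct and follows essentially the same exchange argument as the paper: construct a feasible competitor by zeroing the off-diagonal of $\mathbf{Z}^*$ and absorbing it into $\mathbf{V}$, then bound the penalty change using the triangle inequality and $\|\mathbf{x}\|_q\le\|\mathbf{x}\|_1$. The only (immaterial) difference is that you route each off-diagonal entry $Z^*_{jj'}$ entirely into the upper-triangular slot of $\hat{\mathbf{V}}$, whereas the paper splits it symmetrically as $V^*_{ij}+Z^*_{ij}/2$ over both slots; the resulting bounds and the threshold $\lambda_1>\frac{\lambda_2+\lambda_3}{2}$ are identical.
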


\begin{lemma}
\label{Lemma:DiagonalV}
A sufficient condition for $\mathbf{V}^*$ to be a diagonal matrix is that $\lambda_1 < \frac{\lambda_2}{2}+\frac{\lambda_3}{2(p-1)^{1/s}}$.
\end{lemma}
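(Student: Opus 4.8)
The plan is to argue by contradiction. Suppose $(\mathbf{\Theta}^*,\mathbf{V}^*,\mathbf{Z}^*)$ solves (\ref{Eq:ggmhub}) but $\mathbf{V}^*$ is not diagonal, so that $\mathbf{R} := \mathbf{V}^* - \text{diag}(\mathbf{V}^*) \neq \mathbf{0}$. I would exhibit a competing feasible triple with strictly smaller objective. The natural move is to transfer the entire off-diagonal part of $\mathbf{V}^*$ into $\mathbf{Z}^*$: set $\tilde{\mathbf{V}} = \text{diag}(\mathbf{V}^*)$ and $\tilde{\mathbf{Z}} = \mathbf{Z}^* + \mathbf{R} + \mathbf{R}^T$. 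One checks immediately that $\tilde{\mathbf{V}} + \tilde{\mathbf{V}}^T + \tilde{\mathbf{Z}} = \mathbf{V}^* + (\mathbf{V}^*)^T + \mathbf{Z}^* = \mathbf{\Theta}^*$, so $\mathbf{\Theta}^*$, and hence the loss $-\log\det\mathbf{\Theta}^* + \text{trace}(\mathbf{S}\mathbf{\Theta}^*)$, is unchanged; only the penalty can change. (In fact this shows the statement is really a property of $\text{P}(\mathbf{\Theta})$ alone, independent of the Gaussian loss.)

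Next I would track the effect on each penalty term. Both $\mathbf{V}$-penalties vanish for $\tilde{\mathbf{V}}$ since it is diagonal. Because $\mathbf{R}$ and $\mathbf{R}^T$ have zero diagonal, the diagonal of $\mathbf{Z}$ is untouched, and the new off-diagonal of $\mathbf{Z}$ is $(\mathbf{Z}^* - \text{diag}(\mathbf{Z}^*)) + \mathbf{R} + \mathbf{R}^T$. The triangle inequality then gives $\|\tilde{\mathbf{Z}} - \text{diag}(\tilde{\mathbf{Z}})\|_1 \leq \|\mathbf{Z}^* - \text{diag}(\mathbf{Z}^*)\|_1 + \|\mathbf{R} + \mathbf{R}^T\|_1$, while the entrywise bound $|R_{jj'} + R_{j'j}| \leq |R_{jj'}| + |R_{j'j}|$ yields $\|\mathbf{R} + \mathbf{R}^T\|_1 \leq 2\|\mathbf{R}\|_1$. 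Collecting terms, the change in objective is at most
\[
(2\lambda_1 - \lambda_2)\,\|\mathbf{R}\|_1 - \lambda_3 \sum_{j=1}^p \|\mathbf{R}_j\|_q .
\]

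The crux is to lower-bound the group-norm term so that the $\lambda_3$ contribution matches the stated threshold; this is where the exponent $1/s$ enters. Here I would apply H\"older's inequality column by column: since each $\mathbf{R}_j$ has at most $p-1$ nonzero entries and $\tfrac1s + \tfrac1q = 1$, we have $\|\mathbf{R}_j\|_1 \leq (p-1)^{1/s}\|\mathbf{R}_j\|_q$, hence $\sum_{j}\|\mathbf{R}_j\|_q \geq (p-1)^{-1/s}\sum_{j}\|\mathbf{R}_j\|_1 = (p-1)^{-1/s}\|\mathbf{R}\|_1$. Substituting, the change in objective is at most $\bigl[\,2\lambda_1 - \lambda_2 - \lambda_3 (p-1)^{-1/s}\,\bigr]\|\mathbf{R}\|_1$, and the hypothesis $\lambda_1 < \tfrac{\lambda_2}{2} + \tfrac{\lambda_3}{2(p-1)^{1/s}}$ makes the bracket strictly negative. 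Since $\mathbf{R} \neq \mathbf{0}$ forces $\|\mathbf{R}\|_1 > 0$, the new objective is strictly smaller, contradicting optimality; therefore $\mathbf{V}^*$ must be diagonal. I expect the only real obstacle to be bookkeeping rather than conceptual: keeping the diagonal handled correctly in the transfer (so that no diagonal penalties are inadvertently created) and checking that the triangle inequality on the $\lambda_1$ term and H\"older on the $\lambda_3$ term are applied in the directions that give a valid upper bound on the objective change. The factor $2$ (from symmetrizing $\mathbf{R}$) and the exponent $1/s$ (from H\"older over $p-1$ entries) are exactly what sharpen the naive condition $\lambda_1 < \lambda_2/2$ into the stated threshold.
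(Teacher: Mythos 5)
Your proposal is correct and follows essentially the same route as the paper's proof: both transfer the off-diagonal part of $\mathbf{V}^*$ into $\mathbf{Z}^*$ (your $\tilde{\mathbf{Z}} = \mathbf{Z}^* + \mathbf{R} + \mathbf{R}^T$ is exactly the paper's $\hat{\mathbf{Z}}$), bound the resulting $\ell_1$ increase by $2\lambda_1\|\mathbf{R}\|_1$ via the triangle inequality, and use H\"older's inequality in the form $\|\mathbf{x}\|_1 \le (p-1)^{1/s}\|\mathbf{x}\|_q$ to convert the saved $\ell_1/\ell_q$ penalty into the $(p-1)^{-1/s}$ term, reaching the same contradiction with optimality. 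No gaps.
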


\begin{corollary}
A necessary condition for both $\mathbf{V}^*$ and $\bf{Z}^*$ to be non-diagonal matrices is that $\frac{\lambda_2}{2}+\frac{\lambda_3}{2(p-1)^{1/s}} \le \lambda_1 \le \frac{\lambda_2+\lambda_3}{2}$.
\end{corollary}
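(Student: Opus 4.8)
The plan is to derive this necessary condition directly from Lemmas~\ref{Lemma:DiagonalZ} and~\ref{Lemma:DiagonalV} by a contrapositive argument. Recall that the assertion ``$Q$ is a necessary condition for $P$'' means precisely $P \Rightarrow Q$, or equivalently $\neg Q \Rightarrow \neg P$. Here $P$ is the statement ``both $\mathbf{V}^*$ and $\mathbf{Z}^*$ are non-diagonal,'' and $Q$ is the two-sided inequality $\frac{\lambda_2}{2}+\frac{\lambda_3}{2(p-1)^{1/s}} \le \lambda_1 \le \frac{\lambda_2+\lambda_3}{2}$. Rather than reason about $P$ directly, I would establish the contrapositive: whenever $\lambda_1$ falls outside the stated interval, at least one of $\mathbf{V}^*$, $\mathbf{Z}^*$ is forced to be diagonal, so $P$ cannot hold.

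First I would split the failure of $Q$ into its two cases, which together are exactly $\neg Q$. If $\lambda_1 > \frac{\lambda_2+\lambda_3}{2}$, then Lemma~\ref{Lemma:DiagonalZ} applies and forces $\mathbf{Z}^*$ to be diagonal, so $P$ fails. If instead $\lambda_1 < \frac{\lambda_2}{2}+\frac{\lambda_3}{2(p-1)^{1/s}}$, then Lemma~\ref{Lemma:DiagonalV} applies and forces $\mathbf{V}^*$ to be diagonal, so again $P$ fails. Since the negation of the interval condition $Q$ is precisely the union $\{\lambda_1 > \frac{\lambda_2+\lambda_3}{2}\} \cup \{\lambda_1 < \frac{\lambda_2}{2}+\frac{\lambda_3}{2(p-1)^{1/s}}\}$, these two cases exhaust $\neg Q$, and in each case $\neg P$ holds. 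This gives $\neg Q \Rightarrow \neg P$, hence $P \Rightarrow Q$, which is the desired necessary condition.

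The only point requiring a moment's care is the logical bookkeeping: Lemmas~\ref{Lemma:DiagonalZ} and~\ref{Lemma:DiagonalV} supply \emph{sufficient} conditions for the individual diagonality of $\mathbf{Z}^*$ and $\mathbf{V}^*$, and one must recognize that taking their contrapositives and intersecting the two resulting half-lines produces a \emph{necessary} condition for the joint non-diagonality statement. I would also note in passing that the interval is non-empty, since $(p-1)^{1/s} \ge 1$ for $p \ge 2$ and $1/s \in (0,1]$ (as $\tfrac{1}{s}+\tfrac{1}{q}=1$ with $q \ge 1$), so the condition is not vacuous. No new optimization estimates are needed beyond the two lemmas, so I do not anticipate any genuine obstacle; the corollary is essentially a repackaging of the two lemmas as a single interval condition.
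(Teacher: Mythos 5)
Your argument is correct and is exactly the intended one: the corollary is the contrapositive combination of Lemmas~\ref{Lemma:DiagonalZ} and~\ref{Lemma:DiagonalV}, and the paper itself offers no separate proof beyond those two lemmas. Your added remark that the interval is non-empty (since $(p-1)^{1/s}\ge 1$) is a small bonus not present in the paper.
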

Furthermore, (\ref{Eq:ggmhub}) reduces to the graphical lasso  problem (\ref{Eq:l1penalizeggm}) under a simple condition.

\begin{lemma}
\label{lemma3}
If $q=1$, then (\ref{Eq:ggmhub}) reduces to (\ref{Eq:l1penalizeggm}) with tuning parameter $\min \left \{\lambda_1, \frac{\lambda_2+\lambda_3}{2} \right\}$.
\end{lemma}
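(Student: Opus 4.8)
The plan is to show that when $q=1$, the hub penalty function $\text{P}(\mathbf{\Theta})$ in (\ref{Eq:hubpenalty}) collapses to a single weighted $\ell_1$ penalty on the off-diagonal entries of $\mathbf{\Theta}$, with weight exactly $\min\left\{\lambda_1, \frac{\lambda_2+\lambda_3}{2}\right\}$; substituting this closed form into (\ref{Eq:ggmhub}) then yields (\ref{Eq:l1penalizeggm}) with the claimed tuning parameter.

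First I would simplify the penalty. When $q=1$, the column-wise term satisfies $\sum_{j=1}^p \|(\mathbf{V}-\text{diag}(\mathbf{V}))_j\|_1 = \|\mathbf{V}-\text{diag}(\mathbf{V})\|_1$, so the two $\mathbf{V}$-terms merge and the inner minimization in (\ref{Eq:hubpenalty}) becomes the minimization of $\lambda_1\|\mathbf{Z}-\text{diag}(\mathbf{Z})\|_1 + (\lambda_2+\lambda_3)\|\mathbf{V}-\text{diag}(\mathbf{V})\|_1$ over all $\mathbf{V},\mathbf{Z}$ with $\mathbf{\Theta}=\mathbf{V}+\mathbf{V}^T+\mathbf{Z}$. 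I would then record two structural facts: the diagonal entries of $\mathbf{\Theta}$ are unpenalized and can be assigned to $\mathbf{Z}$ at no cost, so only off-diagonals matter; and since $\mathbf{\Theta}$ and $\mathbf{V}+\mathbf{V}^T$ are symmetric, $\mathbf{Z}=\mathbf{\Theta}-\mathbf{V}-\mathbf{V}^T$ is forced to be symmetric, i.e. $Z_{j'j}=Z_{jj'}$.

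Next I would exploit the fact that the constraint and the (now fully element-wise) objective decouple across unordered off-diagonal pairs $(j,j')$ with $j<j'$. For a fixed pair, writing $a=V_{jj'}$, $b=V_{j'j}$, $z=Z_{jj'}$, and $t=\Theta_{jj'}$, the contribution to the penalty is $2\lambda_1|z|+(\lambda_2+\lambda_3)(|a|+|b|)$ subject to $a+b+z=t$. Since $|a|+|b|$ is minimized by $|a+b|$ at fixed value of $a+b$, this reduces to minimizing $2\lambda_1|z|+(\lambda_2+\lambda_3)|s|$ over $s+z=t$, a one-dimensional piecewise-linear program whose optimum is attained at a vertex and equals $\min\{2\lambda_1,\lambda_2+\lambda_3\}\,|t|$. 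Summing over all pairs $j<j'$ and using $\|\mathbf{\Theta}-\text{diag}(\mathbf{\Theta})\|_1 = 2\sum_{j<j'}|\Theta_{jj'}|$ gives $\text{P}(\mathbf{\Theta}) = \min\left\{\lambda_1, \frac{\lambda_2+\lambda_3}{2}\right\}\,\|\mathbf{\Theta}-\text{diag}(\mathbf{\Theta})\|_1$, from which the claim is immediate.

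The main obstacle I anticipate is justifying the decoupling cleanly, namely arguing that the global minimization over the matrices $\mathbf{V},\mathbf{Z}$ genuinely separates into independent scalar problems, one per off-diagonal pair, and within each scalar problem verifying that the vertex solution $\min\{2\lambda_1,\lambda_2+\lambda_3\}|t|$ is truly optimal (so that no cancellation between $s$ and $z$ can lower the cost). Both points become routine once the symmetry of $\mathbf{Z}$ and the cost-free status of the diagonal are established, so the remainder of the argument is short.
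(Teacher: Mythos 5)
Your proof is correct, and it takes a genuinely different route from the paper's. You compute the hub penalty $\text{P}(\mathbf{\Theta})$ in closed form when $q=1$: after merging the two $\mathbf{V}$-terms into $(\lambda_2+\lambda_3)\|\mathbf{V}-\mathrm{diag}(\mathbf{V})\|_1$, you observe that the decomposition constraint forces $\mathbf{Z}$ to be symmetric and decouples over unordered off-diagonal pairs, and the resulting scalar problem is an infimal convolution of two scaled absolute values, $\min_{s+z=t}\{2\lambda_1|z|+(\lambda_2+\lambda_3)|s|\}=\min\{2\lambda_1,\lambda_2+\lambda_3\}\,|t|$. This yields the exact identity $\text{P}(\mathbf{\Theta})=\min\{\lambda_1,\tfrac{\lambda_2+\lambda_3}{2}\}\|\mathbf{\Theta}-\mathrm{diag}(\mathbf{\Theta})\|_1$ as a statement about the penalty function itself, from which the lemma is immediate. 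The paper instead argues at the level of optimal solutions: it splits into the cases $\lambda_1>\tfrac{\lambda_2+\lambda_3}{2}$ and $\lambda_1<\tfrac{\lambda_2+\lambda_3}{2}$, invokes Lemma~\ref{Lemma:DiagonalZ} to conclude $\mathbf{Z}^*$ is diagonal in the first case, uses strict convexity to show $\mathbf{V}^*$ is symmetric, identifies the objective value with the graphical lasso objective, and finishes with a contradiction argument; only one case is written out, and the boundary case $\lambda_1=\tfrac{\lambda_2+\lambda_3}{2}$ is not treated explicitly. Your argument is more self-contained and uniform --- it covers all parameter configurations at once, needs neither the earlier lemmas nor strict convexity, and gives the stronger conclusion that the two objective functions are identical (not merely that their minimizers agree). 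What the paper's route buys is consistency with the solution-level style of the surrounding lemmas and reuse of Lemma~\ref{Lemma:DiagonalZ}. The two points you flag as needing care --- the pairwise decoupling and the vertex optimality of the piecewise-linear scalar problem --- are both routine exactly as you say, since the entries $V_{jj'},V_{j'j},Z_{jj'}$ for a given unordered pair appear in no other constraint or penalty term, and the scalar objective is convex piecewise linear with breakpoints only at $z=0$ and $z=t$.
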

Note also that when $\lambda_2 \rightarrow \infty$ or $\lambda_3 \rightarrow \infty$,  (\ref{Eq:ggmhub}) reduces to (\ref{Eq:l1penalizeggm}) with tuning parameter $\lambda_1$.  However, throughout the rest of  this paper, we assume that $q=2$, and $\lambda_2$ and $\lambda_3$ are finite.

The solution $\hat{\mathbf{\Theta}}$  of (\ref{Eq:ggmhub}) is unique, since (\ref{Eq:ggmhub}) is a strictly convex problem.  We now consider the question of whether the decomposition $\hat{\mathbf{\Theta}}= \hat{\mathbf{V}} +     \hat{\mathbf{V}}^T + \hat{\mathbf{Z}}$ is unique.  We see that the decomposition is unique in a certain regime of the tuning parameters.  For instance, according to Lemma~\ref{Lemma:DiagonalZ}, when $\lambda_1 > \frac{\lambda_2+\lambda_3}{2}$, $\hat{\mathbf{Z}}$ is a diagonal matrix and hence 
$\hat{\mathbf{V}}$ is unique.        Similarly, according to Lemma~\ref{Lemma:DiagonalV}, when $\lambda_1 < \frac{\lambda_2}{2} + \frac{\lambda_3}{2(p-1)^{1/s}}$, $\hat{\mathbf{V}}$ is a diagonal matrix and hence $\hat{\mathbf{Z}}$ is unique.  
Studying more general conditions on $\mathbf{S}$ and on $\lambda_1$, $\lambda_2$, and $\lambda_3$ such that the decomposition is guaranteed to be unique is a challenging problem and is outside of the scope of this paper.

\subsection{Tuning Parameter Selection}
\label{Sec:tuning parameter}
In this section, we propose a \emph{Bayesian information criterion} (BIC)-type quantity for tuning parameter selection in (\ref{Eq:ggmhub}).  Recall from Section~\ref{Sec:Penalty} that the hub penalty function (\ref{Eq:hubpenalty}) decomposes the parameter of interest into the sum of three matrices, $\mathbf{\Theta} = \mathbf{Z+V+V}^T$,  and places an $\ell_1$ penalty on $\mathbf{Z}$, and an $\ell_1/\ell_2$ penalty on $\mathbf{V}$.

For the graphical lasso problem in (\ref{Eq:l1penalizeggm}), many authors have proposed to select the tuning parameter $\lambda$ such that  $\hat{\mathbf{\Theta}}$ minimizes the following quantity:
\[
-n \cdot \log \det (\hat{\mathbf{\Theta}}) + n\cdot  \text{trace}(\mathbf{S}\hat{\mathbf{\Theta}}) + \log (n) \cdot |\hat{\mathbf{\Theta}}|,
\]
where $|\hat{\mathbf{\Theta}}|$ is the cardinality of $\hat{\mathbf{\Theta}}$, that is, the number of unique non-zeros in $\hat{\mathbf{\Theta}}$ \citep[see, e.g.,][]{YuanLin07}.\footnote{The term $\log(n) \cdot |\hat{\mathbf{\Theta}}|$ is motivated by the fact that the degrees of freedom for an estimate involving the $\ell_1$ penalty can be approximated by the cardinality of the estimated parameter \citep{zou2007degrees}.}  

Using a similar idea, we propose the following BIC-type quantity for selecting the set of tuning parameters $(\lambda_1, \lambda_2, \lambda_3)$ for (\ref{Eq:ggmhub}):
\[
\mathrm{BIC} (\hat{\mathbf{\Theta}},\hat{\mathbf{V}},\hat{\mathbf{Z}}) =-n \cdot \log \det (\hat{\mathbf{\Theta}}) + n \cdot \text{trace}(\mathbf{S}\hat{\mathbf{\Theta}}) + \log (n) \cdot |\hat{\mathbf{Z}}| + \log (n) \cdot \left(\nu+ c  \cdot [|\hat{\mathbf{V}}| - \nu ]\right),
\]
where $\nu$ is the number of estimated hub nodes, that is, $\nu = \sum_{j=1}^p 1_{\{\|\hat{\mathbf{V}}_j\|_0 >0 \}}$, $c$ is a constant between zero and one, and $|\hat{\mathbf{Z}}|$ and $|\hat{\mathbf{V}}|$ are the cardinalities (the number of unique non-zeros) of $\hat{\mathbf{Z}}$ and $\hat{\mathbf{V}}$, respectively.\footnote{ The term $\log(n) \cdot |\hat{\mathbf{Z}}|$ is motivated by the degrees of freedom from the $\ell_1$ penalty, and the term $\log (n) \cdot \left(\nu+ c  \cdot [|\hat{\mathbf{V}}| - \nu ]\right)$ is motivated by an approximation of the degrees of freedom of the $\ell_2$ penalty proposed in \citet{grouplasso}.}
We select the set of tuning parameters $(\lambda_1,\lambda_2,\lambda_3)$ for which the quantity BIC$(\hat{\mathbf{\Theta}},\hat{\mathbf{V}},\hat{\mathbf{Z}})$ is minimized.
Note that when the constant $c$ is small, BIC$(\hat{\mathbf{\Theta}},\hat{\mathbf{V}},\hat{\mathbf{Z}})$ will favor more hub nodes in $\hat{\mathbf{V}}$.  In this manuscript, we take $c=0.2$.

\subsection{Simulation Study}
\label{GGM:simulation}

In this section, we compare HGL to two sets of proposals: proposals that learn an Erd\H{o}s-R\'{e}nyi Gaussian graphical model, and proposals that learn a Gaussian graphical model in which some nodes are highly-connected. 

\subsubsection{Notation and Measures of Performance}
\label{GGM:metric}
    We start by defining some notation.  Let $\hat{\mathbf{\Theta}}$ be the estimate of ${\bf \Theta}={\bf \Sigma}^{-1}$ from a given proposal,  and let $\hat{\mathbf{\Theta}}_j$ be  its $j$th column.  Let $\mathcal{H}$ denote the set of indices of the hub nodes in $\mathbf{\Theta}$ (that is, this is the set of true hub nodes in the graph), and let $|\mathcal{H}|$ denote the cardinality of the set.  In addition, let $\hat{\mathcal{H}}_r$ be the set of \emph{estimated hub nodes}: the set of nodes in $\hat{\bf \Theta}$ that are among the $|\mathcal{H}|$ most highly-connected nodes, and that have at least $r$ edges.  The values chosen for $|\mathcal{H}|$ and $r$ depend on the simulation set-up, and will be specified in each simulation study. 

We now define several measures of performance that will be used  to evaluate the various methods.
\begin{itemize}
\item Number of correctly estimated edges: $\sum_{j <j'}  \left(1_{\{ |\hat{\Theta}_{jj'}| > 10^{-5} \text{ and }  {  | {\Theta}_{jj'}}| \ne 0  \} }\right)$.
\item Proportion of correctly estimated hub edges:
$$\frac{\sum_{j\in \mathcal{H}, j' \ne j}  \left(1_{\{ |\hat{\Theta}_{jj'}| > 10^{-5} \text{ and }  |\Theta_{jj'}| \ne 0  \} }\right)}{\sum_{j \in \mathcal{H}, j' \ne j}  \left( 1_{\{ |\Theta_{jj'}| \ne 0  \} } \right) }.$$

\item Proportion of correctly estimated hub nodes:$ \frac{|\hat{\mathcal{H}}_r \cap \mathcal{H} |}{|\mathcal{H}|}$.

\item Sum of squared errors: $\sum_{j<j'} \left(\hat{\Theta}_{jj'} - \Theta_{jj'}\right)^2$.
\end{itemize}

\subsubsection{Data Generation}
\label{GGM:datagenerate}

We consider three set-ups for generating a $p\times p$ adjacency matrix $\mathbf{A}$.  

\begin{enumerate}[I -]
\item Network with  hub nodes: for all $i<j$, we set $A_{ij}=1$ with probability 0.02, and zero otherwise. We then set $A_{ji}$ equal to  $A_{ij}$.  Next, we randomly select $| \mathcal{H} |$ hub nodes and set the elements of the corresponding rows and columns of $\mathbf{A}$ to equal one with probability 0.7 and zero otherwise.

\item Network with two connected components and hub nodes:  the adjacency matrix is generated as $\mathbf{A}=\begin{pmatrix} \mathbf{A}_1 &0    \\ 0 & \mathbf{A}_2   \end{pmatrix}$, with $\mathbf{A}_1$ and $\mathbf{A}_2$  as in Set-up I, each with $|\mathcal{H}|/2$ hub nodes.

\item Scale-free network:\footnote{Recall that our proposal is not intended for estimating a scale-free network.} the probability that a given node has $k$ edges is proportional to $k^{-\alpha}$.  \cite{barabasi1999} observed that many real-world networks have $\alpha \in [2.1, 4]$; we took $\alpha=2.5$.   Note that there is no natural notion of  hub nodes in a scale-free network.  While some nodes in a scale-free network have more edges than one would expect in an Erd\H{o}s-R\'{e}nyi graph, there is no clear distinction between ``hub" and ``non-hub" nodes, unlike in Set-ups I and II.
In our simulation settings, we consider any node that is connected to more than 5\% of all other nodes to be a  hub node.\footnote{The cutoff threshold of 5\% is chosen  in order to capture the most highly-connected nodes in the scale-free network.  In our simulation study, around three nodes are connected to at least $0.05 \times p$ other nodes in the network. The precise choice of  cut-off threshold has little effect on the results obtained in the figures that follow.}\end{enumerate}

\noindent We then use the adjacency matrix $\bf A$ to create a matrix $\mathbf{E}$, as
\begin{equation*}
E_{ij} \stackrel{\mathrm{i.i.d.}} \sim
\begin{cases}
0 & \text{if } A_{ij}=0\\
\text{Unif}([-0.75, -0.25] \cup [0.25, 0.75])  & \text{otherwise},\\
\end{cases}
\end{equation*}
and  set $\bar{\mathbf{E}}= \frac{1}{2}(\mathbf{E}+\mathbf{E}^T)$.
Given the matrix $\bar{\mathbf{E}}$, we set $\mathbf{\Sigma}^{-1}$ equal to $\bar{\mathbf{E}}+(0.1-\Lambda_{\min}({ \bar{\mathbf{E}}}))\mathbf{I}$, where $\Lambda_{\min}({ \bar{\mathbf{E}}})$ is the smallest eigenvalue of  $\bar{\mathbf{E}}$.  We generate the data matrix $\mathbf{X}$  according to $\mathbf{x}_1,\ldots,\mathbf{x}_n \stackrel{\mathrm{i.i.d.}} \sim N(\mathbf{0}, \mathbf{\Sigma})$.
Then, variables are standardized to have standard deviation one.

\subsubsection{Comparison to  Graphical Lasso and Neighbourhood Selection}
\label{GGM:results}

In this subsection, we compare the performance of HGL to two proposals that learn a sparse Gaussian graphical model. 
\begin{itemize}
\item The graphical lasso  (\ref{Eq:l1penalizeggm}), implemented using the \verb=R= package \verb=glasso=.
\item The neighborhood selection approach of \citet{mb2006}, implemented using the \verb=R= package \verb=glasso=.  This approach involves performing $p$ $\ell_1$-penalized regression problems, each of which involves regressing one feature onto the others. 
\end{itemize}

We consider  the three simulation set-ups described in the previous section with $n=1000$, $p=1500$, and $|\mathcal{H}|=30$ hub nodes in Set-ups I and II.  Figure~\ref{Fig:simulation1} displays the results, averaged over 100 simulated data sets.  Note that the sum of squared errors is not computed for \citet{mb2006}, since it does not directly yield an estimate of $\mathbf{\Theta}=\mathbf{\Sigma}^{-1}$.  

HGL has three tuning parameters. To obtain the curves shown in Figure~\ref{Fig:simulation1}, we fixed $\lambda_1=0.4$, considered three values of $\lambda_3$ (each shown in a different color in Figure~\ref{Fig:simulation1}), and used a fine grid of values of $\lambda_2$. The solid black circle in Figure~\ref{Fig:simulation1} corresponds to the set of tuning parameters $(\lambda_1,\lambda_2,\lambda_3)$  for which the BIC as defined in Section~\ref{Sec:tuning parameter} is minimized.   The graphical lasso and \citet{mb2006} each involves one tuning parameter; we applied them  using a fine grid of the tuning parameter to obtain the curves shown in Figure~\ref{Fig:simulation1}.  

Results for Set-up I are displayed in Figures~\ref{Fig:simulation1}-I(a) through \ref{Fig:simulation1}-I(d), where we calculate the proportion of correctly estimated hub nodes as defined in Section~\ref{GGM:metric} with $r=300$.    Since this simulation set-up exactly matches the assumptions of HGL, it is not surprising that HGL outperforms the other methods.  In particular, HGL is able to identify most of the hub nodes when the number of estimated edges is approximately equal to the true number of edges.  We see similar results for Set-up II in Figures~\ref{Fig:simulation1}-II(a) through \ref{Fig:simulation1}-II(d), where the proportion of correctly estimated hub nodes is as defined in Section~\ref{GGM:metric} with $r=150$.   

In Set-up III, recall that we define a node that is connected to at least 5\% of all nodes to be a hub.  The proportion of correctly estimated hub nodes is then as defined in Section~\ref{GGM:metric} with $r=0.05\times p$.  The results are presented in Figures~\ref{Fig:simulation1}-III(a) through \ref{Fig:simulation1}-III(d).  In this set-up, only approximately three of the nodes (on average) have more than 50 edges, and the hub nodes are not as highly-connected as in Set-up I or Set-up II.  Nonetheless, HGL outperforms the graphical lasso and \citet{mb2006}.  

Finally, we see from Figure 3 that the set of tuning parameters ($\lambda_1,\lambda_2,\lambda_3$) selected using BIC performs reasonably well.  In particular, the graphical lasso solution always has BIC larger than HGL, and hence, is not selected.  

\begin{figure}[htp]
\begin{center}
\includegraphics[scale=0.51]{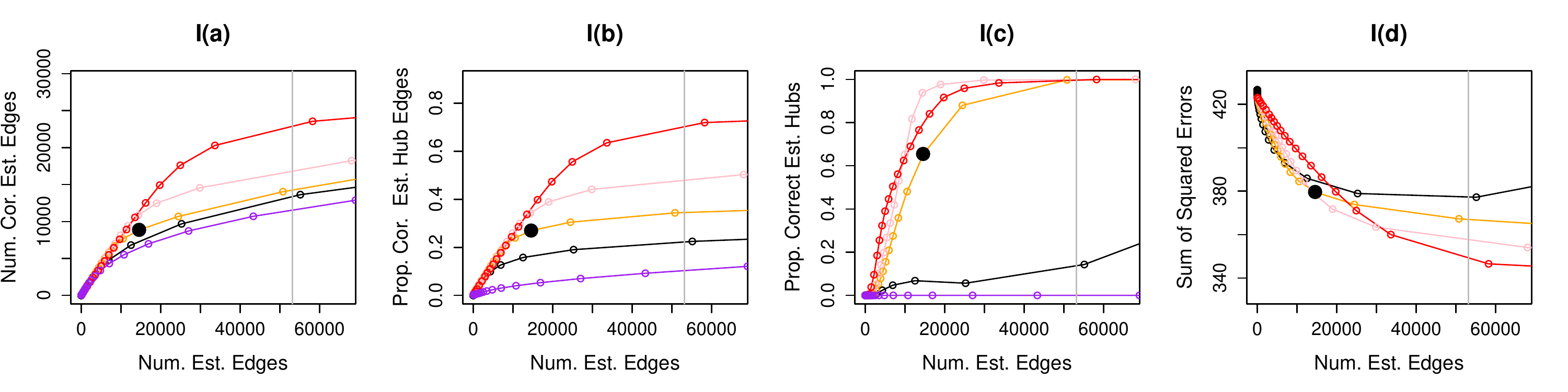}
\includegraphics[scale=0.51]{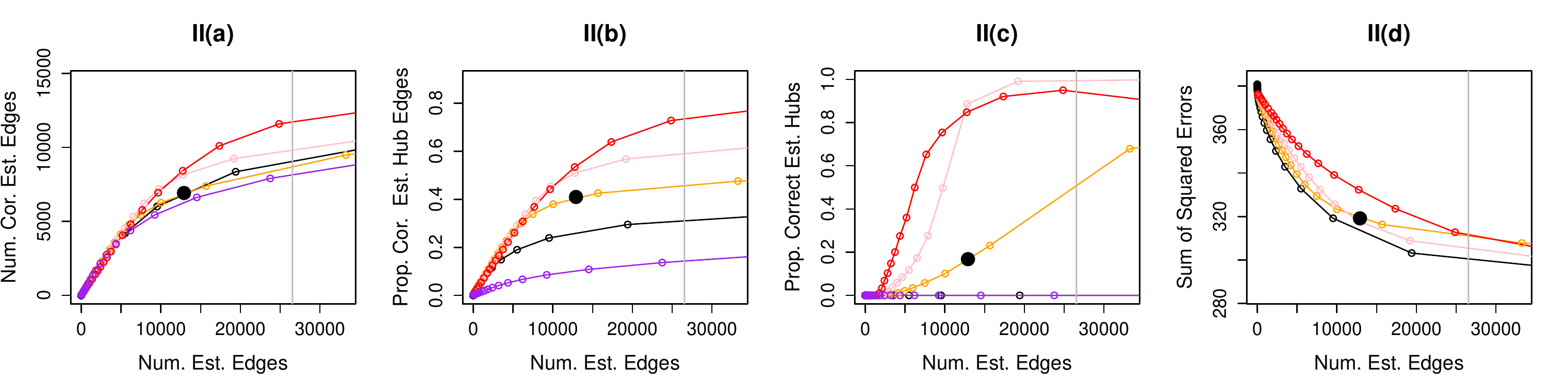}
\includegraphics[scale=0.51]{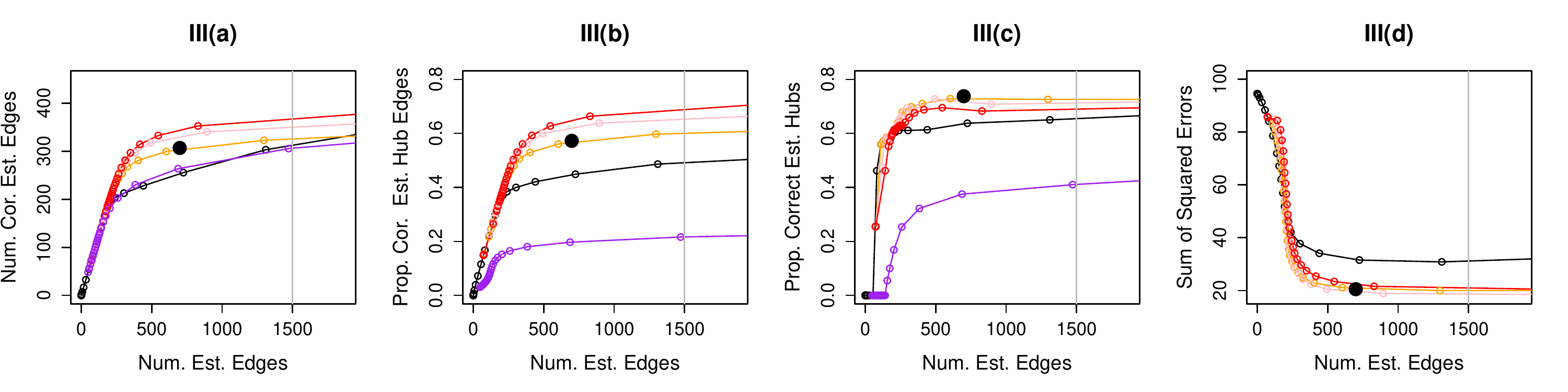}
 \end{center}
  \caption{Simulation for Gaussian graphical model.  Row I:  Results for Set-up I.  Row II: Results for Set-up II.  Row III: Results for Set-up III.  The results are for $n=1000$ and $p=1500$.  In each panel, the $x$-axis displays the number of estimated edges, and the vertical gray line is the number of edges in the true network. The $y$-axes are as follows: Column (a): Number of correctly estimated edges;  Column (b): Proportion of correctly estimated hub edges;  Column (c): Proportion of correctly estimated hub nodes;
  Column (d): Sum of squared errors. The black solid circles are the results for HGL based on tuning parameters selected using the BIC-type criterion defined in Section~\ref{Sec:tuning parameter}. Colored lines correspond to  the graphical lasso {\protect\citep{SparseInv}} (\protect\includegraphics[height=0.4em]{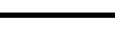}); HGL with $\lambda_3=0.5$ (\protect\includegraphics[height=0.4em]{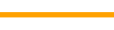}), $\lambda_3=1$ (\protect\includegraphics[height=0.4em]{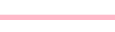}), and $\lambda_3=2$ (\protect\includegraphics[height=0.4em]{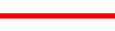}); neighborhood selection  {\protect\citep{mb2006}} (\protect\includegraphics[height=0.4em]{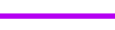}).}
    \label{Fig:simulation1}
\end{figure}%

\subsubsection{Comparison to Additional Proposals}
\label{subSec:other proposal}
In this subsection, we compare the performance of HGL to three additional proposals:
 \begin{itemize}
\item The partial correlation screening procedure of  \cite{HeroandRajaratnam2012}.  The elements of the partial correlation matrix (computed using a pseudo-inverse when $p >n$) are thresholded based on their absolute value, and a    hub node is declared if the number of nonzero elements in the corresponding column of the  thresholded partial correlation matrix is sufficiently large.  Note that the purpose of \citet{HeroandRajaratnam2012} is to screen for hub nodes, rather than to estimate the individual edges in the network.
\item The scale-free network estimation procedure of   \cite{QiangLiu2011}.  This is the solution to the non-convex optimization problem
\begin{equation}
\small 
\label{Eq:LI}
\underset{\mathbf{\Theta}\in \mathcal{S}}{\mathrm{minimize}} \quad \left \{  -\log \det \mathbf{\Theta} + \text{trace}(\mathbf{S\Theta}) + \alpha \sum_{j=1}^p \log(\|\mathbf{\theta}_{\setminus j}\|_1 + \epsilon_j) + \sum_{j=1}^p \beta_j |\theta_{jj}|       \right \}, 
\end{equation}
where $\theta_{\setminus j}= \{\theta_{jj'}| j' \ne j\}$, and $\epsilon_j$, $\beta_j$, and $\alpha$ are tuning parameters.  Here, $\mathcal{S}=\{ \mathbf{\Theta} : \mathbf{\Theta} \succ 0 \text{ and } \mathbf{\Theta}=\mathbf{\Theta}^T  \}$.
 \item Sparse partial correlation estimation procedure of \citet{Space}, implemented using the \verb=R= package \verb=space=.  This is an extension of  the neighborhood selection approach of \citet{mb2006} that combines $p$ $\ell_1$-penalized regression problems in order to obtain a symmetric estimator.  The authors claimed that the proposal performs well in estimating a scale-free network.
 \end{itemize}

We generated data under Set-ups I and III (described in Section~\ref{GGM:datagenerate}) with $n=250$ and $p=500$,\footnote{In this subsection, a small value of $p$ was used due to the computations required to run the R package space, as well as computational demands of the \cite{QiangLiu2011} algorithm.} and with $|\mathcal{H}|=10$ for Set-up I.  The results, averaged over 100 data sets, are displayed in Figures~\ref{Fig:simulation1b} and \ref{Fig:simulation1c}.

To obtain Figures~\ref{Fig:simulation1b} and \ref{Fig:simulation1c}, we applied \citet{QiangLiu2011} using a fine grid of $\alpha$ values, and using the choices for $\beta_j$ and $\epsilon_j$ specified by the authors: $\beta_j = 2 \alpha / \epsilon_j$, where $\epsilon_j$ is a small constant specified in \cite{QiangLiu2011}.
There are two tuning parameters in \citet{HeroandRajaratnam2012}: (1) $\rho$, the value used to threshold the partial correlation matrix, and (2) $d$, the number of non-zero elements required for a column of the  thresholded matrix to be declared a hub node. 
  We used {$d=\{10,20\}$ in Figures~\ref{Fig:simulation1b} and~\ref{Fig:simulation1c}, and used a fine grid of values for $\rho$.  Note that the value of $d$ has no effect on the results for Figures~\ref{Fig:simulation1b}(a)-(b) and Figures~\ref{Fig:simulation1c}(a)-(b), and that larger values of $d$ tend to yield worse results in 
  Figures~\ref{Fig:simulation1b}(c) and \ref{Fig:simulation1c}(c).  For \citet{Space}, we used a fine grid of tuning parameter  values to obtain the curves shown in Figures~\ref{Fig:simulation1b} and~\ref{Fig:simulation1c}. The sum of squared errors was not reported for \citet{Space} and \citet{HeroandRajaratnam2012}  since they do not directly yield an estimate of the precision matrix.  As a baseline reference, the graphical lasso is included in the comparison.       

We see from Figure~\ref{Fig:simulation1b} that HGL outperforms the competitors when the underlying network contains hub nodes.  It is not surprising that  \citet{QiangLiu2011} yields better results than the graphical lasso,  since the former approach is implemented via an iterative procedure: in each iteration, the graphical lasso is performed with an updated tuning parameter based on the estimate obtained in the previous iteration.  \citet{HeroandRajaratnam2012}  has the worst results in Figures~\ref{Fig:simulation1b}(a)-(b); this is not surprising,  since the purpose of \citet{HeroandRajaratnam2012} is to screen for hub nodes, rather than to estimate the individual edges in the network.    

 {From Figure~\ref{Fig:simulation1c}, we see that the performance of HGL is comparable to that of \citet{QiangLiu2011} and \citet{Space} under the assumption of a scale-free network; note that this is the precise setting for which \citet{QiangLiu2011}'s proposal is intended, and \citet{Space} reported that their proposal performs well in this setting.  In contrast, HGL is not intended for the scale-free network setting (as mentioned in the Introduction, it is intended for a setting with hub nodes). Again, \citet{QiangLiu2011} and \citet{Space} outperform the graphical lasso, and \cite{HeroandRajaratnam2012} has the worst results in Figures~\ref{Fig:simulation1c}(a)-(b).  
Finally, we see from Figures~\ref{Fig:simulation1b} and~\ref{Fig:simulation1c} that  the BIC-type criterion for HGL proposed in Section~\ref{Sec:tuning parameter} yields good results.

\begin{figure}[htp]
\begin{center}
\includegraphics[scale=0.51]{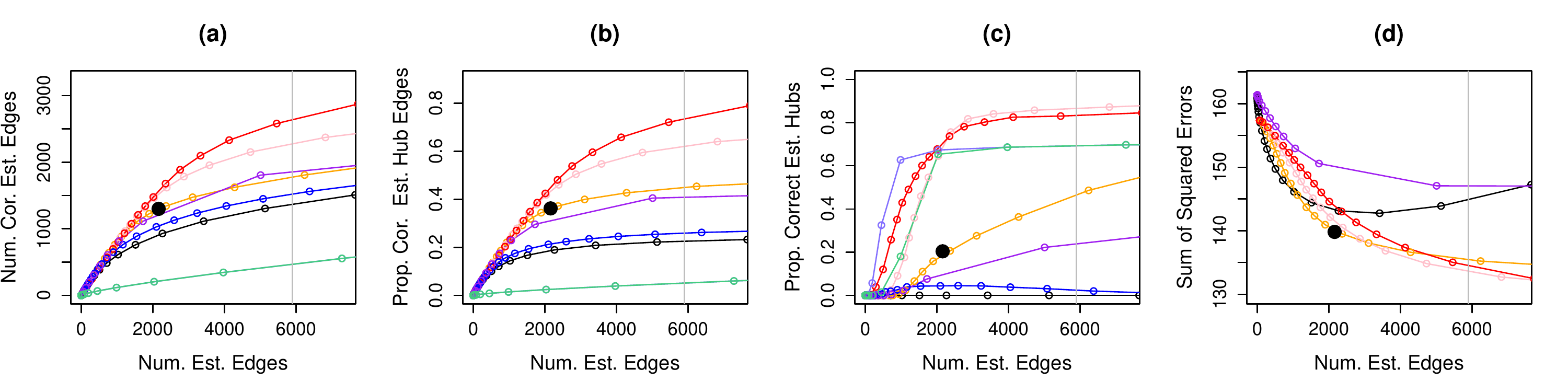}
 \end{center}
  \caption{Simulation for the Gaussian graphical model. Set-up I was applied with $n=250$ and $p=500$. Details of the axis labels and the solid black circles are as in Figure~\ref{Fig:simulation1}. The colored lines correspond to the  graphical lasso {\protect\citep{SparseInv}} (\protect\includegraphics[height=0.5em]{black.png}); HGL with $\lambda_3=1$ (\protect\includegraphics[height=0.5em]{orange.png}), $\lambda_3=2$ (\protect\includegraphics[height=0.5em]{pink.png}), and $\lambda_3=3$ (\protect\includegraphics[height=0.5em]{red.png}); the hub screening procedure  {\protect\citep{HeroandRajaratnam2012}}  with $d=10$ (\protect\includegraphics[height=0.5em]{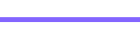}) and $d=20$  (\protect\includegraphics[height=0.5em]{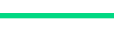}); the scale-free network approach {\protect\citep{QiangLiu2011}} (\protect\includegraphics[height=0.5em]{purple.png}); sparse partial correlation estimation {\protect\citep{Space}} (\protect\includegraphics[height=0.5em]{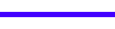}).}
  \label{Fig:simulation1b}
\end{figure}%

\begin{figure}[htp]
\begin{center}
\includegraphics[scale=0.51]{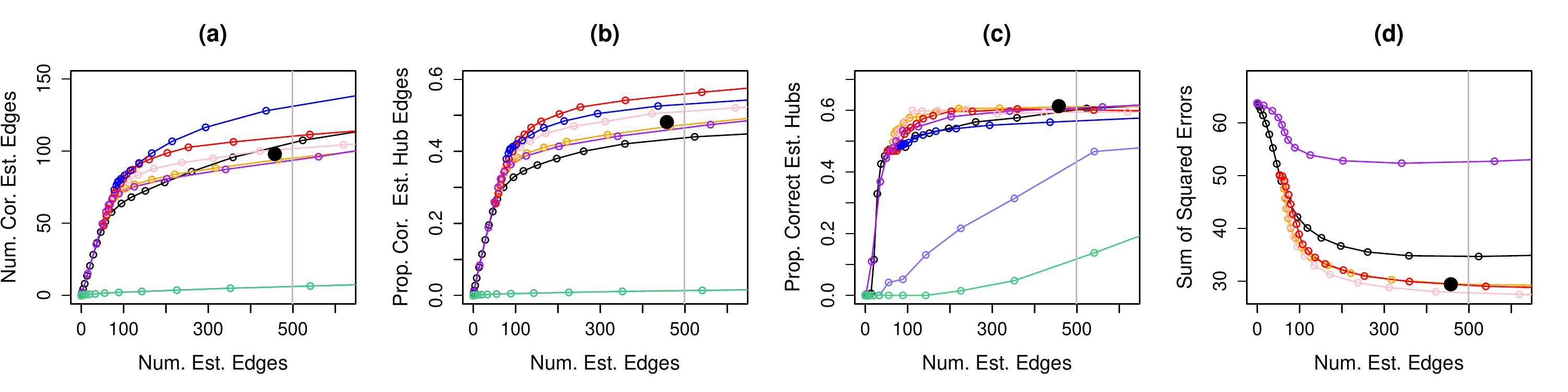}
 \end{center}
  \caption{Simulation for the Gaussian graphical model. Set-up III was applied with $n=250$ and $p=500$.  Details of the axis labels and the solid black circles are as in Figure~\ref{Fig:simulation1}. The colored lines correspond to the  graphical lasso {\protect\citep{SparseInv}} (\protect\includegraphics[height=0.5em]{black.png}); HGL with $\lambda_3=1$ (\protect\includegraphics[height=0.5em]{orange.png}), $\lambda_3=2$ (\protect\includegraphics[height=0.5em]{pink.png}), and $\lambda_3=3$ (\protect\includegraphics[height=0.5em]{red.png}); the hub screening procedure  {\protect\citep{HeroandRajaratnam2012}}  with $d=10$ (\protect\includegraphics[height=0.5em]{slateblue1.png}) and $d=20$  (\protect\includegraphics[height=0.5em]{green.png}); the scale-free network approach {\protect\citep{QiangLiu2011}} (\protect\includegraphics[height=0.5em]{purple.png}); sparse partial correlation estimation {\protect\citep{Space}} (\protect\includegraphics[height=0.5em]{blue.png}).} 
   \label{Fig:simulation1c}
\end{figure}%

\section{The Hub Covariance Graph}
\label{Sec:Covariance}

In this section, we consider estimation of  a covariance matrix under the assumption that $\mathbf{x}_1,\ldots,\mathbf{x}_n \stackrel{\small \mathrm{i.i.d.}}\sim  N(\mathbf{0},\mathbf{\Sigma})$; this is of interest because the sparsity pattern of  $\mathbf{\Sigma}$ specifies the structure of the marginal independence graph  \citep[see, e.g.,][]{Drton2003covgraph,Chaudhurietal2007,DrtonRichardson08}.  We extend the  covariance estimator of \citet{Xueetal2012} to accommodate hub nodes.

\subsection{Formulation and Algorithm}
\label{Cov:formulation}

 \citet{Xueetal2012} proposed to estimate $\bf \Sigma$ using 
\begin{equation}
\label{Eq:CovXue}
\hat{\mathbf{\Sigma}} = \underset{\mathbf{\Sigma} \in \mathcal{S}}{\arg\min}\left\{  \frac{1}{2} \| \mathbf{\Sigma}- \mathbf{S}  \|_F^2 + \lambda \|\mathbf{\Sigma}\|_1 \right\},
\end{equation}
where $\mathbf{S}$ is the empirical covariance matrix, $\mathcal{S} = \{\mathbf{\Sigma}: \mathbf{\Sigma} \succeq \epsilon \mathbf{I} \text{ and } \mathbf{\Sigma}=\mathbf{\Sigma}^T  \}$, and $\epsilon$ is a small positive constant; we take $\epsilon  = 10^{-4}$.   
We extend  (\ref{Eq:CovXue}) to accommodate hubs  by imposing the hub penalty function (\ref{Eq:hubpenalty})  on  $\mathbf{\Sigma}$. This results in the \emph{hub covariance graph} (HCG) optimization problem,
\begin{equation*}
 \underset{\mathbf{\Sigma} \in \mathcal{S} } {\text{minimize}} \qquad 
  \left\{ \frac{1}{2} \|\mathbf{\Sigma}-\mathbf{S}\|_F^2  + \text{P}(\mathbf{\Sigma})  \right\},
 \end{equation*}
which can be solved via Algorithm~\ref{Alg:general}.  To update $\mathbf{\Theta}=\mathbf{\Sigma}$ in Step 2(a)i, we note that

\begin{equation*}
\underset{\mathbf{\Sigma} \in \mathcal{S}}{\arg \min} \left \{  \frac{1}{2} \|\mathbf{\Sigma}-\mathbf{S}\|_F^2  + \frac{\rho}{2} \| \mathbf{\Sigma}-\tilde{\mathbf{\Sigma}}+\mathbf{W}_1  \|_F^2 \right \}
= \frac{1}{1+\rho} (\mathbf{S}+\rho\tilde{\mathbf{\Sigma}}-\rho \mathbf{W}_1)^+,
\end{equation*}
where $(\mathbf{A})^+$ is the projection of a matrix $\mathbf{A}$ onto the convex cone $\{ \mathbf{\Sigma} \succeq \epsilon \mathbf{I} \}$.  That is, if  $\sum_{j=1}^p d_j \mathbf{u}_j \mathbf{u}_j^T$ denotes the eigen-decomposition of the matrix $\mathbf{A}$, then $(\mathbf{A})^+$ is defined as \\$\sum_{j=1}^p \max(d_j,\epsilon) \mathbf{u}_j \mathbf{u}_j^T$.  The complexity of the ADMM algorithm is $O(p^3)$ per iteration, due to the complexity of the eigen-decomposition for updating $\mathbf{\Sigma}$.

\subsection{Simulation Study}
\label{Cov:simulation}
We compare HCG to two competitors for obtaining a sparse estimate of $\mathbf{\Sigma}$:
\begin{enumerate}
\item The non-convex $\ell_1$-penalized log-likelihood approach of  \citet{BienTibs11}, using the \verb=R= package \verb=spcov=.  This approach solves
\begin{equation*}
\label{Eq:Bien}
 \underset{\mathbf{\Sigma} \succ 0 } {\text{minimize}} 
  \left \{ \log \det \mathbf{\Sigma} + \text{trace}(\mathbf{\Sigma}^{-1}\mathbf{S}) + \lambda \|\mathbf{\Sigma}\|_1\right\}.
 \end{equation*}

\item The convex $\ell_1$-penalized approach of \citet{Xueetal2012}, given in (\ref{Eq:CovXue}).
\end{enumerate}

We first generated an adjacency matrix $\mathbf{A}$  as in Set-up I in Section~\ref{GGM:datagenerate}, modified to have $|\mathcal{H}| = 20$ hub nodes.  Then $\bar{\mathbf{E}}$ was  generated as described in Section~\ref{GGM:datagenerate}, and we set $\mathbf{\Sigma}$ equal to $\bar{\mathbf{E}} +(0.1-\Lambda_{\min}(\bar{\mathbf{E}}))\mathbf{I}$.   Next, we generated $\mathbf{x}_1, \ldots, \mathbf{x}_n \stackrel{\small \mathrm{i.i.d.}} \sim N(\mathbf{0},\mathbf{\Sigma})$.  Finally, we standardized the variables to have standard deviation one. In this simulation study, we set $n=500$ and $p=1000$.  

 Figure~\ref{Fig:CovSim1} displays the results, averaged over 100 simulated data sets.  We calculated the proportion of correctly estimated hub nodes as defined in Section 3.3.1 with $r=200$.  We used a fine grid of tuning parameters for \citet{Xueetal2012} in order to obtain the curves shown in each panel of Figure~\ref{Fig:CovSim1}.  HCG involves three tuning parameters, $\lambda_1$, $\lambda_2$, and $\lambda_3$.  We fixed $\lambda_1 = 0.2$, considered three values of $\lambda_3$ (each shown in a different color), and varied $\lambda_2$ in order to obtain the curves shown in Figure~\ref{Fig:CovSim1}.  
 
 Figure~\ref{Fig:CovSim1} does not display the results for the proposal of \citet{BienTibs11}, due to computational constraints in the \verb=spcov= \verb=R= package. Instead, we compared our proposal to that of \citet{BienTibs11} using $n=100$ and $p=200$; those  results are presented in Figure~\ref{Fig:CovSimsmall} in  Appendix D.

\begin{figure}[htp]
\begin{center}
\includegraphics[scale=0.51]{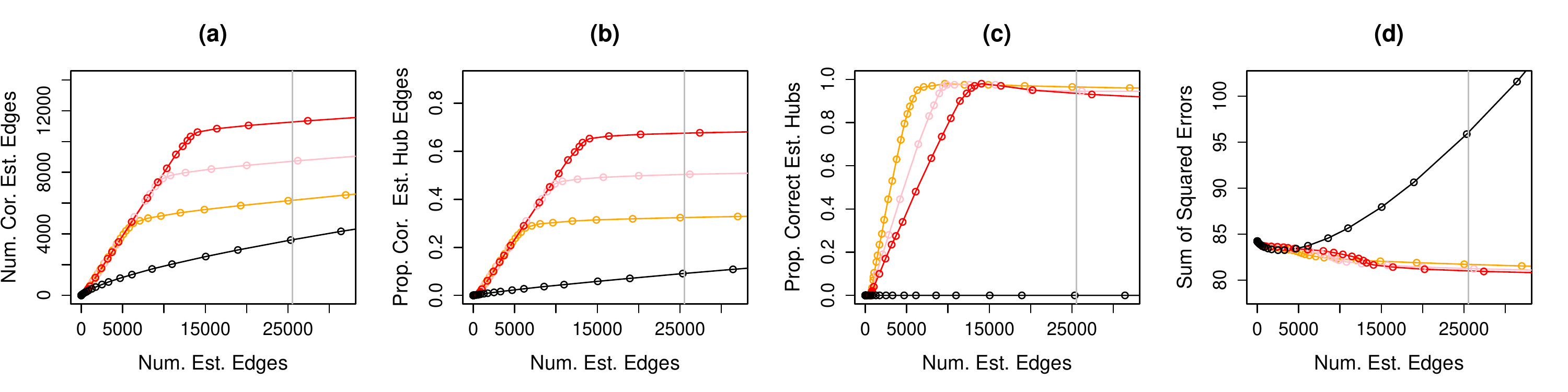}
 \end{center}
  \caption{Covariance graph simulation with $n=500$ and $p=1000$.  Details of the axis labels are as in Figure~\ref{Fig:simulation1}.  The colored lines correspond to the proposal of {\protect\citet{Xueetal2012}} (\protect\includegraphics[height=0.5em]{black.png}); HCG with $\lambda_3=1$ (\protect\includegraphics[height=0.5em]{orange.png}), $\lambda_3=1.5$ (\protect\includegraphics[height=0.5em]{pink.png}), and  $\lambda_3=2$ (\protect\includegraphics[height=0.5em]{red.png}).}  
  \label{Fig:CovSim1}
\end{figure}

We see that HCG outperforms the proposals of \citet{Xueetal2012} (Figures~\ref{Fig:CovSim1} and \ref{Fig:CovSimsmall}) and \citet{BienTibs11} (Figure~\ref{Fig:CovSimsmall}).  These results are not surprising,  since those other methods do not explicitly model the hub nodes.

\section{The Hub Binary Network}
\label{Sec:Binary}

In this section, we focus on estimating a binary Ising Markov random field, which we refer to as a binary network. We refer the reader to \citet{ravikumaretal2010} for an in-depth discussion of this type of graphical model and its applications. 

In this set-up, each entry of the $n\times p$ data matrix $\mathbf{X}$ takes on a  value of zero or one.  We assume that the observations $\mathbf{x}_1,\ldots,\mathbf{x}_n$ are i.i.d. with density
\begin{equation}
\label{Eq:Isingmodel}
p(\mathbf{x},\mathbf{\Theta}) = \frac{1}{Z(\mathbf{\Theta})}\exp  \left[ \sum_{j=1}^p \theta_{jj} x_j  +\sum_{1\le j < j' \le p} \theta_{jj'} x_j x_{j'}   \right],
\end{equation}
 where $Z({\mathbf{\Theta}})$  is the partition function, which ensures that the density sums  to one.  
Here $\mathbf{\Theta}$ is a $p\times p$ symmetric matrix that specifies the network structure: $\theta_{jj'}=0$ implies that the $j$th and $j'$th variables are conditionally independent. 

In order to obtain a sparse graph, \citet{LeeSIetal2007} considered maximizing an $\ell_1$-penalized log-likelihood under this model. Due to the difficulty in computing the log-partition function, several authors have considered alternative approaches.  For instance, \citet{ravikumaretal2010} proposed a neighborhood selection approach.  The proposal of \citet{ravikumaretal2010} involves solving $p$ logistic regression separately, and hence, the estimated parameter matrix is not symmetric.  In contrast, several authors considered maximizing an  $\ell_1$-penalized  pseudo-likelihood with a symmetric constraint on $\mathbf{\Theta}$ \citep[see, e.g.,][]{Hoefling2009,jianguobinary,guoasymptotic}.

\subsection{Formulation and Algorithm}
\label{Binary:formulation}
Under the model   (\ref{Eq:Isingmodel}), the log-pseudo-likelihood for $n$ observations takes the form 
\begin{equation}
\label{Eq:logpseudo}
\sum_{j=1}^p \sum_{j'=1}^p \theta_{jj'} (\mathbf{X}^T\mathbf{X})_{jj'} - \sum_{i=1}^n\sum_{j=1}^p \log \left( 1+   \text{exp}\left[\theta_{jj}+ \sum_{j'\ne j} \theta_{jj'}x_{ij'}\right] \right),
\end{equation}
 where $\mathbf{x}_i$ is the $i$th row of the $n\times p$ matrix $\mathbf{X}$.  The proposal of \citet{Hoefling2009} involves maximizing  (\ref{Eq:logpseudo}) subject to
  an $\ell_1$ penalty on $\bf \Theta$. We propose to instead impose the  hub penalty function (\ref{Eq:hubpenalty}) on $\mathbf{\Theta}$ in (\ref{Eq:logpseudo}) in order to estimate a sparse binary network with hub nodes.  
This leads to the optimization problem
\begin{eqnarray}
\small
\label{Eq:binaryformulation}
\begin{aligned}
 &\underset{\mathbf{\Theta} \in \mathcal{S}} {\text{minimize}} 
 && \left \{  -\sum_{j=1}^p \sum_{j'=1}^p \theta_{jj'} (\mathbf{X}^T\mathbf{X})_{jj'} + \sum_{i=1}^n\sum_{j=1}^p \log \left( 1+   \text{exp}\left[\theta_{jj}+ \sum_{j'\ne j} \theta_{jj'}x_{ij'}\right] \right)  + \text{P}(\mathbf{\Theta})  \right\},
\end{aligned}
 \end{eqnarray}
 where $\mathcal{S} = \{\mathbf{\Theta} : \mathbf{\Theta}=\mathbf{\Theta}^T \}$. 
We refer to the solution to (\ref{Eq:binaryformulation}) as the \emph{hub binary  network} (HBN). 
The ADMM algorithm for solving (\ref{Eq:binaryformulation}) is given in Algorithm~\ref{Alg:general}. We solve the update 
for $\mathbf{\Theta}$ in Step 2(a)i 
using the Barzilai-Borwein method \citep{barzilai1988two}.  The details are given in Appendix F.

\subsection{Simulation Study}
\label{Binary:simulation}

Here we compare the performance of HBN to the proposal of \citet{Hoefling2009}, implemented using the \verb=R= package \verb=BMN=.

 We simulated a binary network with $p=50$ and $|\mathcal{H}| = 5$ hub nodes. To generate the  parameter matrix $\mathbf{\Theta}$, we created  an adjacency matrix $\mathbf{A}$  as in Set-up I of Section~\ref{GGM:datagenerate} with five hub nodes.  Then $\bar{\mathbf{E}}$ was generated as in Section~\ref{GGM:datagenerate}, and we set $\mathbf{\Theta}= \bar{\mathbf{E}}$.  

  Each of $n=100$ observations was generated using Gibbs sampling \citep{ravikumaretal2010,jianguobinary}.  Suppose that $x_1^{(t)},\ldots, x_p^{(t)}$ is obtained at the $t$th iteration of the Gibbs sampler.  Then, the $(t+1)$th iteration is obtained according to 
  \[
x_{j}^{(t+1)} \sim \text{Bernoulli} \left(  \frac{\exp(\theta_{jj} + \sum_{j\ne j'} \theta_{jj'} x_{j'}^{(t)})}{1+\exp(\theta_{jj} + \sum_{j\ne j'} \theta_{jj'} x_{j'}^{(t)})}    \right) \qquad \text{for } j=1,\ldots, p.
\] 
We took the first $10^5$ iterations as our burn-in period, and then collected an observation every $10^4$ iterations, such that the observations were nearly independent \citep{jianguobinary}.

The results, averaged over 100 data sets, are shown in Figure~\ref{Fig:BinarySim1}.  We used a fine grid of values for the $\ell_1$ tuning parameter for \citet{Hoefling2009}, resulting in curves shown in each panel of the figure.  For HBN, we fixed $\lambda_1=5$, considered $\lambda_3 = \{15,25,30\}$, and used a fine grid of values of $\lambda_2$.  The proportion of correctly estimated hub nodes was calculated using the definition in Section~\ref{GGM:metric} with $r=20$.         Figure~\ref{Fig:BinarySim1} indicates that HBN  consistently outperforms the proposal of  \citet{Hoefling2009}.

\begin{figure}[htp]
\begin{center}
\includegraphics[scale=0.51]{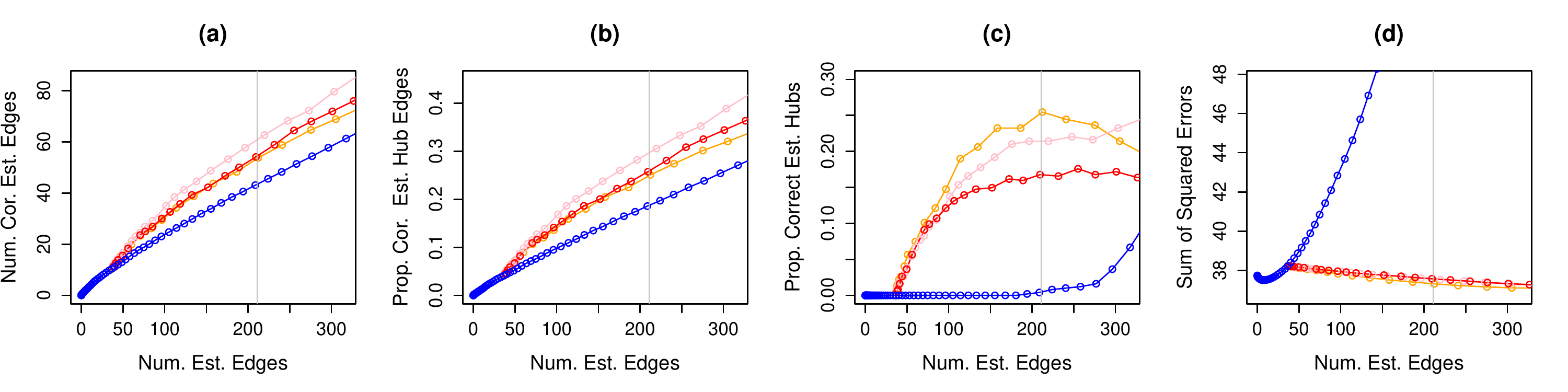}
 \end{center}
  \caption{Binary network simulation with $n=100$ and $p=50$.  Details of the axis labels are as in  Figure~\ref{Fig:simulation1}.  The colored
    lines correspond to the $\ell_1$-penalized  pseudo-likelihood proposal  of
      \protect\citet{Hoefling2009} 
     (\protect\includegraphics[height=0.5em]{blue.png});  and HBN with 
   $\lambda_3=15$  (\protect\includegraphics[height=0.5em]{orange.png}), $\lambda_3=25$ (\protect\includegraphics[height=0.5em]{red.png}), and $\lambda_3=30$ 
  (\protect\includegraphics[height=0.5em]{pink.png}).}
  \label{Fig:BinarySim1}
\end{figure}

\section{Real Data Application}
\label{Sec:realdata}
We now  apply HGL to  a university webpage data set,  and a brain cancer data set. 

\subsection{Application to University Webpage Data}
\label{sec:real_data_analysis} We applied HGL to the university
webpage data set from the ``World Wide Knowledge Base" project at
Carnegie Mellon University.  This data set was pre-processed by
\citet{webpage2011}. The
data set consists of the occurrences of various terms (words) on  webpages from four computer science departments at
Cornell, Texas, Washington and Wisconsin.  We consider only the 544
student webpages, and select 100 terms with the largest entropy for
our analysis.  In what follows, we model these 100 terms as the nodes in a Gaussian graphical model.

The goal of the analysis is to  understand the relationships among
the terms that appear on the student webpages.  In particular, we wish to identify terms that are hubs.  We are not interested in identifying edges between non-hub nodes.  For this reason, we fix the tuning parameter that controls the sparsity of $\mathbf{Z}$ at $\lambda_1 = 0.45$ such that the matrix $\mathbf{Z}$ is sparse.  In the interest of a graph that is interpretable, we fix $\lambda_3=1.5$ to obtain only a few hub nodes, and then select a value of $\lambda_2$ ranging from 0.1 to 0.5  using the BIC-type criterion presented in Section~\ref{Sec:tuning parameter}.  We performed HGL with the selected tuning parameters $\lambda_1=0.45$, $\lambda_2=0.25$, and $\lambda_3=1.5$.\footnote{The results are qualitatively similar for different values of $\lambda_1$.}   The estimated matrices are shown in Figure
\ref{fig:webpage_V_Z}.

Figure \ref{fig:webpage_V_Z}(a) indicates that six hub nodes are detected:
  \emph{comput}, \emph{research}, \emph{scienc}, \emph{software}, \emph{system}, and \emph{work}.  For instance, the fact that \emph{comput} is a hub indicates that many terms'  occurrences are explained by the occurrence of the word \emph{comput}.
  From Figure \ref{fig:webpage_V_Z}(b), we see that several pairs of terms take on non-zero values in the matrix ${(\bf Z} -  \mathrm{diag}({\bf Z}))$.  These include \emph{(depart, univers)}; \emph{(home, page)}; \emph{(institut, technolog)}; \emph{(graduat, student)}; \emph{(univers, scienc)}, and \emph{(languag,program)}. These results provide an intuitive explanation of the relationships among the terms in the webpages.



\begin{figure}[htp]
\begin{center}
(a) \hspace{85mm} (b)
\includegraphics[scale=0.68]{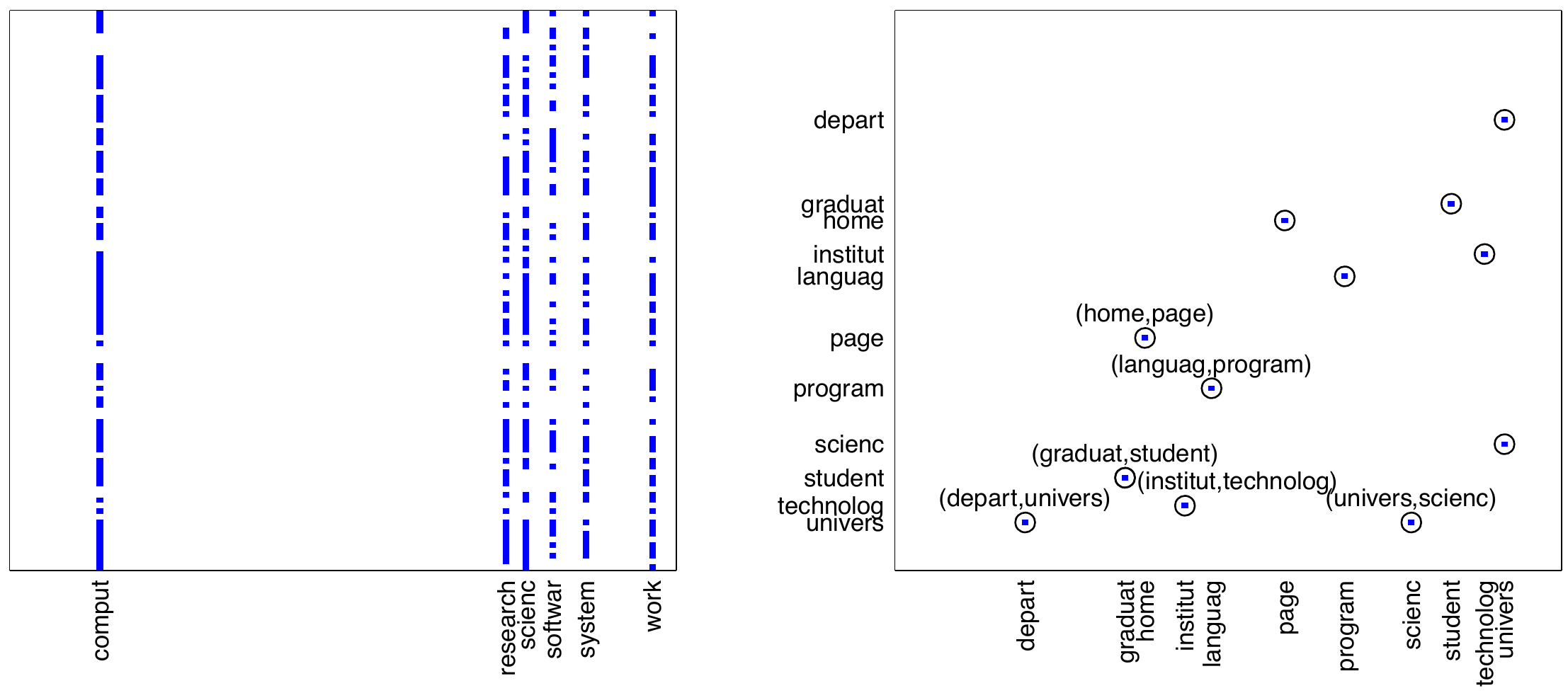}
\end{center}
\caption{Results for HGL on the webpage data with tuning parameters selected using BIC: $\lambda_1 = 0.45$, $\lambda_2 = 0.25$, $\lambda_3 = 1.5$. Non-zero estimated values  are shown, for \emph{(a):}  ${(\bf V} -  \mathrm{diag}({\bf V}))$, and \emph{(b):} ${(\bf Z} -  \mathrm{diag}({\bf Z}))$.} 
\label{fig:webpage_V_Z}
\end{figure}


\subsection{Application to Gene Expression Data}

We applied HGL to a publicly available cancer gene expression
data set \citep{cancer2012}.  The data  set consists of  mRNA expression levels for
17,814 genes in 401 patients with glioblastoma multiforme (GBM), an
extremely aggressive cancer with very poor patient prognosis.  Among 7,462 genes  known to be associated
with cancer \citep{malacards2013}, we selected 500 genes 
 with the highest variance. 
 
We
aim to reconstruct the gene regulatory network that represents the interactions among the genes, as well as to identify hub genes that
tend to have many interactions with other genes. Such genes likely play an important role in regulating many other
genes in the network. Identifying such regulatory genes will  lead to a better
understanding of brain cancer, and
eventually may lead to new therapeutic targets.  Since we are interested in identifying hub genes, and not as interested in identifying edges between non-hub nodes, we fix $\lambda_1=0.6$ such that the matrix $\mathbf{Z}$ is sparse.  We fix $\lambda_3=6.5$ to obtain a few hub nodes, and we select $\lambda_2$ ranging from 0.1 to 0.7 using the BIC-type criterion presented in Section~\ref{Sec:tuning parameter}.  

We applied HGL with this set of tuning parameters to the empirical covariance matrix corresponding to the $401 \times 500$ data matrix, after standardizing each gene to have variance one.
 In Figure~\ref{Figure:gene-network}, we plotted the resulting
network (for simplicity, only the 438 genes with at least two neighbors are displayed). We found that five
 genes are identified as hubs. These genes are TRIM48, TBC1D2B, PTPN2,
ACRC, and ZNF763, in decreasing order of estimated edges.

Interestingly, some of these genes have known regulatory roles.
PTPN2 is known to be a signaling molecule that regulates a variety
of cellular processes including cell growth, differentiation,
mitotic cycle, and oncogenic transformation~\citep{entrez}.
ZNF763 is a DNA-binding protein that regulates the transcription of
other genes~\citep{entrez}. These genes do not appear to be highly-connected to many other genes in the estimate that results from applying the graphical lasso (\ref{Eq:l1penalizeggm}) to this same data set (results not shown).
 These results indicate that HGL can be used to recover known regulators, as well as to suggest other potential regulators that may be targets for follow-up analysis.

\begin{figure}[htp]
\begin{center}
\includegraphics[scale=0.85]{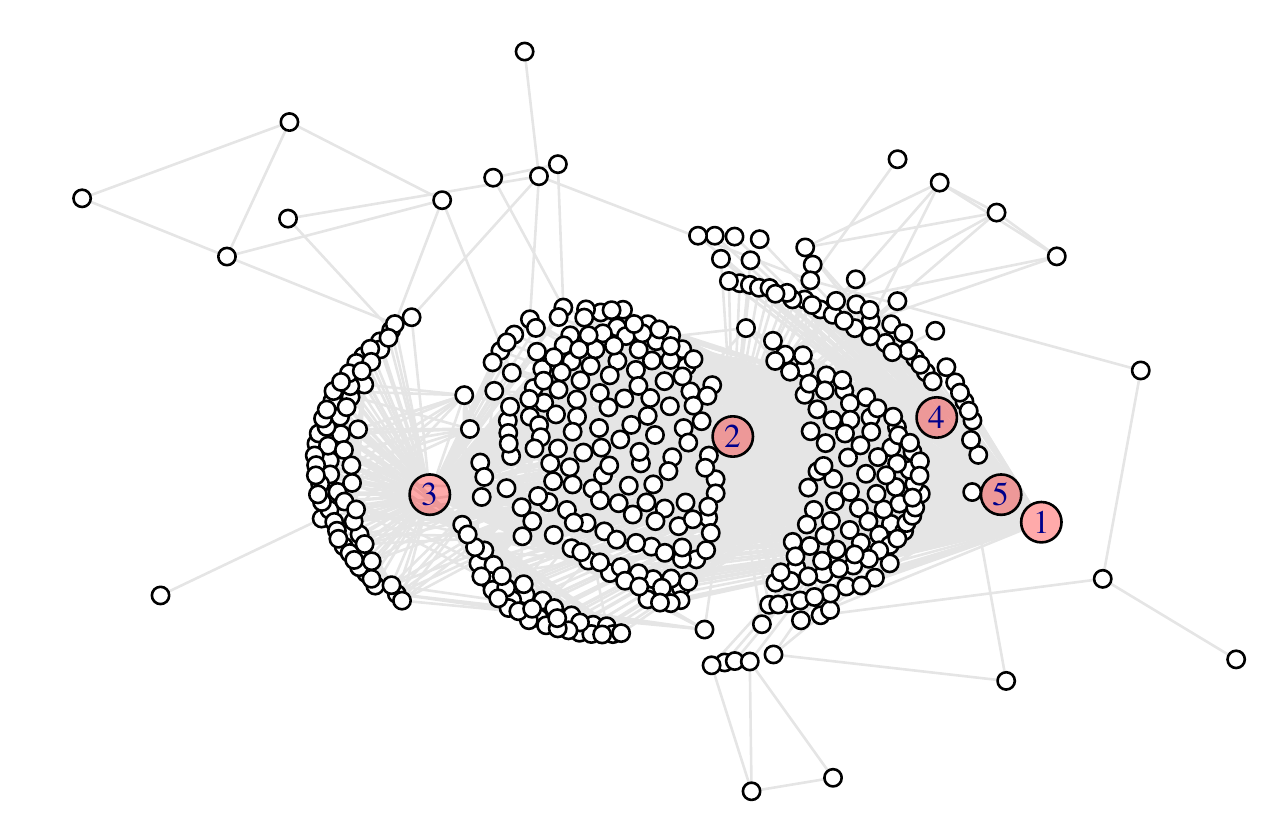}
\includegraphics[scale=0.83]{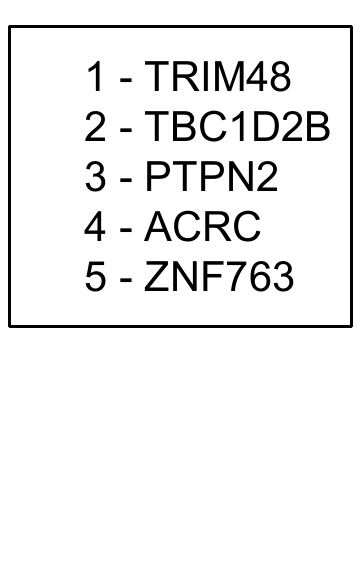}
\end{center}
\caption{ Results for HGL on the GBM data with tuning parameters selected using BIC: $\lambda_1 = 0.6$, $\lambda_2 = 0.4$,
$\lambda_3 = 6.5$. Only nodes with at least two edges in the estimated network are displayed.  Nodes displayed in pink were found to be hubs by the HGL algorithm.}
\label{Figure:gene-network}
\end{figure}


\section{Discussion}
\label{Sec:Discussion}
We have proposed a general framework for estimating a network with hubs by way of a convex penalty function.     The proposed framework has three tuning parameters, so that it can flexibly accommodate different numbers of hubs, sparsity levels within a hub, and connectivity levels among non-hubs.  We have proposed a BIC-type quantity to select tuning parameters for our proposal.  We note that  tuning parameter selection in unsupervised settings remains a challenging open problem \citep[see, e.g.,][]{FoygelDrton10,StabilitySelection}.
  In practice, tuning parameters could also be set based on domain knowledge or a desire for interpretability of the resulting estimates.

The framework proposed in this paper assumes an underlying model involving a set of edges between non-hub nodes, as well as a set of hub nodes. 
For instance, it is believed that such hub nodes arise in biology, in which ``super hubs" in transcriptional regulatory networks may play important roles \citep{Haoetal2012}.
We note here that the underlying model of hub nodes assumed in this paper differs fundamentally from a scale-free network  in which the degree of connectivity of the nodes follows a power law distribution---scale-free networks simply do not have such very highly-connected hub nodes.   In fact, we have shown that existing techniques for estimating a scale-free network, such as
\citet{QiangLiu2011} and \citet{Defazio2012},  cannot accommodate the very dense hubs for which our proposal is intended.

As discussed in Section~\ref{Sec:Penalty}, the hub penalty function involves decomposing a parameter matrix $\bf \Theta$ into ${\bf Z}+{\bf V}+{\bf V}^T$, where $\bf Z$ is a sparse matrix, and $\bf V$ is a matrix whose columns are entirely zero or (almost) entirely non-zero. 
 In this paper, we used an $\ell_1$ penalty on $\bf Z$ in order to encourage it to be sparse. In effect, this amounts to assuming that the non-hub nodes obey an Erd\H{o}s-R\'{e}nyi network. But our formulation could be easily modified to accommodate a different network
 prior for the non-hub nodes. For instance, we could assume that the non-hub nodes  obey a scale-free network, using the ideas  developed in   \citet{QiangLiu2011} and \citet{Defazio2012}. This would amount to modeling a scale-free network with hub nodes.

In this paper, we applied the proposed framework to the tasks of estimating a Gaussian graphical model, a covariance graph model, and a binary network.  The proposed framework can also be applied to other types of graphical models, such as the Poisson graphical model \citep{AllenLiu2012} or the exponential family graphical model \citep{Yangetal2012}.  

In future work, we will study the theoretical statistical properties of the HGL formulation.  For instance, in the context of the graphical lasso, it is known that the rate of statistical convergence depends upon the maximal degree of any node in the network \citep{Ravikumar2011}. 
  It would be interesting to see whether HGL theoretically outperforms the graphical lasso in the setting in which the true underlying network contains  hubs.  Furthermore, it will be of interest to study HGL's hub recovery properties from a theoretical perspective.

An \verb=R= package \verb=hglasso= is publicly available on the authors' websites and on \verb=CRAN=.


\acks{We thank three reviewers for helpful comments that improved the quality of this manuscript.  We thank Qiang Liu helpful responses to our inquiries regarding  \citet{QiangLiu2011}. The authors acknowledge funding from the following sources: NIH DP5OD009145 and NSF CAREER DMS-1252624 and Sloan Research Fellowship  to DW,  NSF CAREER  ECCS-0847077 to MF, and Univ. Washington Royalty Research Fund  to DW, MF, and SL.}


\newpage

\section*{Appendix A: Derivation of Algorithm~\ref{Alg:general}}

Recall that the scaled augmented Lagrangian for (\ref{Eq:reformulate}) takes the form
 \begin{equation}
 \label{Appendix:lagrangian}
\begin{split}
L(\mathbf{B} ,\tilde{\mathbf{B}},\mathbf{W}) &= \ell(\mathbf{X},\mathbf{\Theta})    + \lambda_1 \| \mathbf{Z} - \text{diag}(\mathbf{Z})\|_1+ \lambda_2  \| \mathbf{V} - \text{diag}(\mathbf{V})\|_1 \\
&+\lambda_3 \sum_{j=1}^p \| (\mathbf{V}  - \text{diag}(\mathbf{V}))_j \|_2 +g(\tilde{\mathbf{B}})  +\frac{\rho}{2}\|\mathbf{B}-\tilde{\mathbf{B}}+\mathbf{W}  \|^2_F.\\
\end{split}
\end{equation}

\noindent The proposed ADMM algorithm requires the following updates:
\begin{enumerate}
\item $\mathbf{B}^{(t+1)} \leftarrow \underset{\mathbf{B}}{\text{argmin  }} L(\mathbf{B},\tilde{\mathbf{B}}^{t},\mathbf{W}^{t})$,
\item $\tilde{\mathbf{B}}^{(t+1)} \leftarrow \underset{\tilde{\mathbf{B}}}{\text{argmin  }} L(\mathbf{B}^{(t+1)},\tilde{\mathbf{B}},\mathbf{W}^{t})$,
\item $\mathbf{W}^{(t+1)} \leftarrow \mathbf{W}^{t}+\mathbf{B}^{(t+1)}-\tilde{\mathbf{B}}^{(t+1)}$.
\end{enumerate}

\noindent We now proceed to derive the updates for $\mathbf{B}$ and $\tilde{\mathbf{B}}$.

\subsection*{Updates for $\mathbf{B}$}

\noindent   To obtain updates for $\mathbf{B}=(\mathbf{\Theta,V,Z})$, we exploit the fact that (\ref{Appendix:lagrangian}) is separable in  $\mathbf{\Theta}, \mathbf{V}$, and  $\mathbf{Z}$. Therefore, we can simply update with respect to   $\mathbf{\Theta}, \mathbf{V}$, and  $\mathbf{Z}$ one-at-a-time. Update for $\mathbf{\Theta}$ depends on the form of the convex loss function, and is  addressed in the main text. Updates for $\bf V$ and $\bf Z$ can be easily seen to take the  form given in Algorithm 1.

\subsection*{Updates for $\tilde{\mathbf{B}}$}
Minimizing the function in (\ref{Appendix:lagrangian}) with respect to $\tilde{\mathbf{B}}$ is equivalent to 

 \begin{equation}
 \label{Equation:lagrangian3}
\begin{aligned}
& \underset{{\tilde{\mathbf{\Theta}}},\tilde{\mathbf{V}},\tilde{\mathbf{Z}}        } {\text{minimize}} 
&&\left\{ \frac{\rho}{2}\|\mathbf{\Theta} -\tilde{\mathbf{\Theta}}+ \mathbf{W}_1  \|^2_F +\frac{\rho}{2}\|\mathbf{V} -\tilde{\mathbf{V}}+ \mathbf{W}_2  \|^2_F + \frac{\rho}{2}\|\mathbf{Z} -\tilde{\mathbf{Z}}+ \mathbf{W}_3  \|^2_F\right\}\\
& \text{subject to}
& & \tilde{\mathbf{\Theta}} = \tilde{\mathbf{Z}}+\tilde{\mathbf{V}} + \tilde{\mathbf{V}}^T.
\end{aligned}
\end{equation}
Let $\mathbf{\Gamma}$ be the $p\times p$ Lagrange multiplier matrix for the equality constraint.  Then,  the Lagrangian for (\ref{Equation:lagrangian3}) is
\begin{equation*}
\frac{\rho}{2}\|\mathbf{\Theta} -\tilde{\mathbf{\Theta}}+ \mathbf{W}_1  \|^2_F +\frac{\rho}{2}\|\mathbf{V} -\tilde{\mathbf{V}}+ \mathbf{W}_2  \|^2_F + \frac{\rho}{2}\|\mathbf{Z} -\tilde{\mathbf{Z}}+ \mathbf{W}_3  \|^2_F + \langle \mathbf{\Gamma}  ,\tilde{\mathbf{\Theta}} - \tilde{\mathbf{Z}}-\tilde{\mathbf{V}} - \tilde{\mathbf{V}}^T\rangle.
\end{equation*}
A little bit of algebra yields 
\[
\tilde{\mathbf{\Theta}} = \mathbf{\Theta} + \mathbf{W}_1 - \frac{1}{\rho} \mathbf{\Gamma},
\]
\[
\tilde{\mathbf{V}} = \frac{1}{\rho}( \mathbf{\Gamma+\Gamma}^T) +\mathbf{V} + \mathbf{W}_2,
\]
\[
\tilde{\mathbf{Z}} = \frac{1}{\rho} \mathbf{\Gamma}+ \mathbf{Z} + \mathbf{W}_3,
\]
where  $\mathbf{\Gamma} = \frac{\rho}{6}[(\mathbf{\Theta} + \mathbf{W}_1) -(\mathbf{V+W}_2) - (\mathbf{V+W}_2)^T-(\mathbf{Z+W}_3)]$.

\section*{Appendix B: Conditions for  HGL Solution to be Block-Diagonal }

We begin by introducing some notation.  
Let $\|\mathbf{V}\|_{u,v}$ be the $\ell_u / \ell_v$ norm of a matrix $\mathbf{V}$.  For instance,   
$\| \mathbf{V} \|_{1,q} = \sum_{j=1}^p \| \mathbf{V}_j  \|_q $.  
 We define the support of a matrix $\bf \Theta$ as follows: $\text{supp}(\mathbf{\Theta}) = \{(i,j): \Theta_{ij} \ne 0\}$. We say that $\mathbf{\Theta}$ is supported on a set $\mathcal{G}$ if $\text{supp}(\mathbf{\Theta})\subseteq \mathcal{G}$.
Let $\{C_1,\ldots, C_K\}$ be a partition of the index set $\{1,\ldots,p \}$, and let $\mathcal{T} = \cup_{k=1}^K \{ C_k \times C_k\}$. 
   We let $\mathbf{A}_{\mathcal{T}}$ denote the restriction of the matrix $\mathbf{A}$ to the set $\mathcal{T}$: that is, $(\mathbf{A}_{\mathcal{T}})_{ij}=0$ if $(i,j)\notin \mathcal{T}$ and $(\mathbf{A}_{\mathcal{T}})_{ij}=A_{ij}$ if $(i,j)\in \mathcal{T}$.  Note that any matrix supported on $\mathcal{T}$ is block-diagonal with $K$ blocks,  subject to some permutation of its rows and columns.     Also, let $S_{\max} = \underset{(i,j)\in \mathcal{T}^c}\max |S_{ij}|$.  Define 

\begin{eqnarray}
\label{normequation}
\begin{array}{rccl}
 \tilde{\mathbf{P}}(\mathbf{\Theta}) &=& \underset{{\mathbf{V, Z}}} {\text{min}} &  \| \mathbf{Z}-\text{diag}(\mathbf{Z})\|_1 + \hat{\lambda}_2 \| \mathbf{V}-\text{diag}(\mathbf{V})\|_1 + \hat{\lambda}_3 \|\mathbf{V}-\text{diag}(\mathbf{V}) \|_{1,q}\\
 && \text{subject to} & \mathbf{\Theta} = \mathbf{Z+V+V}^T,
\end{array}
\end{eqnarray}
where $\hat{\lambda}_2=\frac{\lambda_2}{\lambda_1}$ and $\hat{\lambda}_3 = \frac{\lambda_3}{\lambda_1}$.  Then, optimization problem (\ref{Eq:ggmhub}) is equivalent to 
\begin{eqnarray}
\label{Equation:reduceHGL}
 \underset{{\mathbf{\Theta}}\in \mathcal{S}} {\text{minimize}} & -\log \det(\mathbf{\Theta}) +\langle \mathbf{\Theta,S}\rangle + \lambda_1 \tilde{\mathbf{P}}(\mathbf{\Theta}),
\end{eqnarray}
where $\mathcal{S}= \{\mathbf{\Theta}:\mathbf{\Theta} \succ 0 ,\mathbf{\Theta}=\mathbf{\Theta}^T\}$.

\subsection*{Proof of Theorem 1 (Sufficient Condition)}

\begin{proof}
First, we note that if $(\mathbf{\Theta,V,Z})$ is a feasible solution to (\ref{Eq:ggmhub}), then $(\mathbf{\Theta_{\mathcal{T}}},\mathbf{V_{\mathcal{T}}},\mathbf{Z_{\mathcal{T}}}     )$ is also a feasible solution to (\ref{Eq:ggmhub}).  Assume that $(\mathbf{\Theta,V,Z})$ is not supported on $\mathcal{T}$. We want to show that the objective value of (\ref{Eq:ggmhub}) evaluated at  $(\mathbf{\Theta}_{\mathcal{T}},\mathbf{V}_{\mathcal{T}},\mathbf{Z}_{\mathcal{T}})$ is smaller than the objective value of (\ref{Eq:ggmhub}) evaluated at  $(\mathbf{\Theta,V,Z})$.  By Fischer's inequality \citep{HJ85}, 
\[
-\log \det (\mathbf{\Theta})\ge - \log \det (\mathbf{\Theta_{\mathcal{T}}}).
\]
Therefore, it remains to show that 
\begin{eqnarray*} \label{Equation:Thm1}
\begin{array}{rcl}
\langle \mathbf{\Theta, S}\rangle+\lambda_1 \|{\mathbf{Z}} - \text{diag}({\mathbf{Z}})\|_1 + \lambda_2 \|{\mathbf{V}} - \text{diag}({\mathbf{V}})\|_1 + \lambda_3  \|{\mathbf{V}} - \text{diag}({\mathbf{V}})\|_{1,q} &>& \\
\langle \mathbf{\Theta}_{\mathcal{T}}, \mathbf{S}\rangle+\lambda_1 \|{\mathbf{Z}_{\mathcal{T}}} - \text{diag}({\mathbf{Z}_{\mathcal{T}}})\|_1 + \lambda_2 \|{\mathbf{V}_{\mathcal{T}}} - \text{diag}({\mathbf{V}_{\mathcal{T}}})\|_1 + \lambda_3  \|{\mathbf{V}_{\mathcal{T}}} - \text{diag}({\mathbf{V}_{\mathcal{T}}})\|_{1,q},   \\
\end{array}
\end{eqnarray*}

\noindent or equivalently, that
\begin{equation*}
\langle \mathbf{\Theta}_{\mathcal{T}^c}, \mathbf{S} \rangle + \lambda_1 \| \mathbf{Z}_{\mathcal{T}^c}  \|_1+ \lambda_2 \| \mathbf{V}_{\mathcal{T}^c}\|_1+ \lambda_3(\|  \mathbf{V}-\text{diag}(\mathbf{V}) \|_{1,q} -\|  \mathbf{V}_{\mathcal{T}}-\text{diag}(\mathbf{V}_{\mathcal{T}}) \|_{1,q}  ) > 0.
\end{equation*}

\noindent Since $\|  \mathbf{V}-\text{diag}(\mathbf{V}) \|_{1,q} \ge \|  \mathbf{V}_{\mathcal{T}}-\text{diag}(\mathbf{V}_{\mathcal{T}}) \|_{1,q}$, it suffices to show that 
\begin{equation}
\label{Equation:Thm1-2-2}
\begin{split}
\langle \mathbf{\Theta}_{\mathcal{T}^c}, \mathbf{S} \rangle+ \lambda_1 \| \mathbf{Z}_{\mathcal{T}^c}  \|_1+ \lambda_2 \| \mathbf{V}_{\mathcal{T}^c}\|_1 > 0.
\end{split}
\end{equation}

\noindent Note that $\langle \mathbf{\Theta}_{\mathcal{T}^c}, \mathbf{S} \rangle$ =  $\langle \mathbf{\Theta}_{\mathcal{T}^c}, \mathbf{S}_{\mathcal{T}^c} \rangle$.  By the sufficient condition, $S_{\max} < \lambda_1$ and $2 S_{\max} < \lambda_2$.

\noindent In addition, we have that 
\begin{equation*}
\begin{split}
|\langle \mathbf{\Theta}_{\mathcal{T}^c}, \mathbf{S} \rangle|  &=|\langle \mathbf{\Theta}_{\mathcal{T}^c}, \mathbf{S}_{\mathcal{T}^c} \rangle| \\
&= |\langle \mathbf{V}_{\mathcal{T}^c}+\mathbf{V}^T_{\mathcal{T}^c}+\mathbf{Z}_{\mathcal{T}^c}, \mathbf{S}_{\mathcal{T}^c} \rangle| \\
&= |\langle 2\mathbf{V}_{\mathcal{T}^c}+\mathbf{Z}_{\mathcal{T}^c}, \mathbf{S}_{\mathcal{T}^c} \rangle| \\
&\le (2 \| \mathbf{V}_{\mathcal{T}^c} \|_1 +\|\mathbf{Z}_{\mathcal{T}^c}   \|_1 )S_{\max}\\
&<\ \lambda_2 \| \mathbf{V}_{\mathcal{T}^c} \|_1 +\lambda_1 \|\mathbf{Z}_{\mathcal{T}^c} \|_1,
\end{split}
\end{equation*}
where the last inequality follows from the sufficient condition.  We have shown (\ref{Equation:Thm1-2-2})  as desired.

\end{proof}

\subsection*{Proof of Theorem 2 (Necessary Condition)}
We first present a simple lemma for proving Theorem 2.  Throughout the proof of Theorem 2, $\| \cdot \|_\infty$ indicates the maximal absolute element of a matrix and $\|\cdot \|_{\infty,s}$ indicates the dual norm of $\| \cdot\|_{1,q}$.
\begin{lemma}
The dual representation of $\tilde{\mathbf{P}}( \mathbf{\Theta})$ in (\ref{normequation}) is
\begin{eqnarray}
\label{dualrepresentation}
\begin{array}{rccl}
\tilde{\mathbf{P}}^*(\mathbf{\Theta}) &=& \underset{\mathbf{X},\mathbf{Y},\mathbf{\Lambda}}\max & \langle \mathbf{\Lambda,\Theta} \rangle  \\
            && \mathrm{subject} \text{ } \mathrm{to} & \mathbf{\Lambda} + \mathbf{\Lambda}^T = \hat{\lambda}_2 \mathbf{X} + \hat{\lambda}_3 \mathbf{Y} \\
             &&			  & \|\mathbf{X}\|_{\infty} \leq 1, \|\mathbf{\Lambda}\|_{\infty} \leq 1, \|\mathbf{Y}\|_{\infty,s} \leq 1 \\
             &&			  & {X}_{ii} = 0, {Y}_{ii} = 0, {\Lambda}_{ii} = 0 \; \text{for } i=1,\ldots,p,
\end{array}
\end{eqnarray}
where $\frac{1}{s} + \frac{1}{q} = 1$.
\end{lemma}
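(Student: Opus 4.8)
The plan is to obtain (\ref{dualrepresentation}) as the Lagrangian dual of the minimization problem (\ref{normequation}) and then to close the gap by strong duality. I would begin by viewing $\tilde{\mathbf{P}}(\mathbf{\Theta})$ as the minimization of the convex, positively homogeneous objective $\|\mathbf{Z}-\text{diag}(\mathbf{Z})\|_1 + \hat{\lambda}_2\|\mathbf{V}-\text{diag}(\mathbf{V})\|_1 + \hat{\lambda}_3\|\mathbf{V}-\text{diag}(\mathbf{V})\|_{1,q}$ over the pair $(\mathbf{Z},\mathbf{V})$ subject to the single \emph{affine} equality constraint $\mathbf{\Theta} = \mathbf{Z}+\mathbf{V}+\mathbf{V}^T$. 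Introducing a $p\times p$ matrix multiplier $\mathbf{\Lambda}$ for this constraint, the Lagrangian separates additively into a term in $\mathbf{Z}$, a term in $\mathbf{V}$, and the constant $\langle\mathbf{\Lambda},\mathbf{\Theta}\rangle$, so the dual function factorizes and each inner minimization can be evaluated on its own.

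Next I would evaluate the two inner minimizations using positive homogeneity: for any seminorm $N$, the quantity $\min_{\mathbf{U}}\{N(\mathbf{U}) - \langle\mathbf{M},\mathbf{U}\rangle\}$ equals $0$ when $\mathbf{M}$ lies in the polar set $\{\mathbf{M}: \langle\mathbf{M},\mathbf{U}\rangle \le N(\mathbf{U}) \text{ for all } \mathbf{U}\}$ and is $-\infty$ otherwise. For the $\mathbf{Z}$ term, $N(\mathbf{Z}) = \|\mathbf{Z}-\text{diag}(\mathbf{Z})\|_1$, whose polar set is exactly $\{\mathbf{\Lambda}: \|\mathbf{\Lambda}\|_\infty \le 1,\ \Lambda_{ii}=0\}$; the vanishing-diagonal requirement arises because $N$ ignores diagonal entries, so a nonzero $\Lambda_{ii}$ would drive the minimization to $-\infty$. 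For the $\mathbf{V}$ term I would first rewrite $\langle \mathbf{\Lambda},\mathbf{V}+\mathbf{V}^T\rangle = \langle \mathbf{\Lambda}+\mathbf{\Lambda}^T,\mathbf{V}\rangle$, so finiteness forces $\mathbf{\Lambda}+\mathbf{\Lambda}^T$ to lie in the polar set of the composite seminorm $\hat{\lambda}_2\,g_1 + \hat{\lambda}_3\,g_2$, where $g_1 = \|\cdot-\text{diag}(\cdot)\|_1$ and $g_2 = \|\cdot-\text{diag}(\cdot)\|_{1,q}$.

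The key structural step is to characterize this last polar set as a Minkowski sum. Since the conjugate of a sum of norms is the infimal convolution of their conjugates, and the conjugate of a norm is the indicator of its dual unit ball, the polar set of $\hat{\lambda}_2\,g_1 + \hat{\lambda}_3\,g_2$ equals $\hat{\lambda}_2 B_1^\circ + \hat{\lambda}_3 B_2^\circ$, where $B_i^\circ$ is the dual ball of $g_i$. Concretely, this means $\mathbf{\Lambda}+\mathbf{\Lambda}^T = \hat{\lambda}_2\mathbf{X} + \hat{\lambda}_3\mathbf{Y}$ for some $\mathbf{X},\mathbf{Y}$ with $\mathbf{X}$ in the dual ball of $g_1$ (namely $\|\mathbf{X}\|_\infty\le 1$, $X_{ii}=0$) and $\mathbf{Y}$ in the dual ball of $g_2$. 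Here I would recall that the dual norm of the group norm $\|\mathbf{V}\|_{1,q}=\sum_j\|\mathbf{V}_j\|_q$ is $\|\mathbf{Y}\|_{\infty,s}=\max_j\|\mathbf{Y}_j\|_s$ with $\tfrac{1}{s}+\tfrac{1}{q}=1$, giving $\|\mathbf{Y}\|_{\infty,s}\le 1$, $Y_{ii}=0$. Collecting the two feasibility conditions reproduces precisely the constraints of (\ref{dualrepresentation}), and on the feasible set the dual function reduces to $\langle\mathbf{\Lambda},\mathbf{\Theta}\rangle$.

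Finally, to convert the resulting dual \emph{bound} into the stated \emph{equality}, I would invoke strong duality: the primal (\ref{normequation}) is a convex program that is feasible (take $\mathbf{Z}=\mathbf{\Theta}$, $\mathbf{V}=\mathbf{0}$), has a finite objective everywhere, and has only affine constraints, so there is no duality gap; equivalently, one may note that $\tilde{\mathbf{P}}$ is a seminorm and (\ref{dualrepresentation}) is simply its support-function / bidual representation over the corresponding dual ball. I expect the \textbf{main obstacle} to be the careful bookkeeping in the two polar-set computations — in particular, justifying the vanishing-diagonal constraints that stem from the nontrivial nullspace of the diagonal-free seminorms, and verifying the Minkowski-sum characterization of the dual ball of the composite $\mathbf{V}$-penalty — rather than the duality argument itself.
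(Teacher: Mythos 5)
Your argument is correct, and it reaches the identical dual problem by a recognizably different route from the paper. The paper first replaces each of the three (semi)norms by its variational representation $N(\mathbf{U})=\max\{\langle \mathbf{M},\mathbf{U}\rangle: \mathbf{M}\in B^\circ\}$, so that the dual variables $\mathbf{\Lambda},\mathbf{X},\mathbf{Y}$ are all present from the outset, and then swaps $\min$ and $\max$ by a minimax theorem; the constraint $\mathbf{\Lambda}+\mathbf{\Lambda}^T=\hat\lambda_2\mathbf{X}+\hat\lambda_3\mathbf{Y}$ then falls out of the inner linear minimization over the affine set $\mathbf{\Theta}=\mathbf{Z}+\mathbf{V}+\mathbf{V}^T$. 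You instead dualize only the equality constraint, so your single multiplier $\mathbf{\Lambda}$ is the genuine Lagrange multiplier of the decomposition, and you must then manufacture $\mathbf{X}$ and $\mathbf{Y}$ \emph{a posteriori} by identifying the polar of the composite $\mathbf{V}$-penalty as the Minkowski sum $\hat\lambda_2 B_1^\circ+\hat\lambda_3 B_2^\circ$ (conjugate of a sum equals infimal convolution of conjugates, exact here because both dual balls are compact, so the sum is closed and attained). Your route requires this extra piece of conjugate calculus that the paper avoids, but it buys two things: the duality-gap argument is cleaner (strong duality for a feasible convex program with only affine constraints, versus the paper's minimax swap, which as written asserts compactness of the primal constraint set even though that set is an unbounded affine subspace and only the dual side is actually compact), and it makes transparent why $\mathbf{\Lambda}$ reappears as the subgradient in the optimality condition used in the proof of Theorem~\ref{Theorem:BD2}. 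Your handling of the vanishing-diagonal constraints via the nullspace of the diagonal-free seminorms, and of the $\ell_\infty/\ell_s$ dual of the $\ell_1/\ell_q$ column norm, is also correct.
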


\begin{proof}
We first state the dual representations for the norms in (\ref{normequation}):
\begin{eqnarray*}
\begin{array}{rccl}
\|\mathbf{Z}-\text{diag}(\mathbf{Z}) \|_1 &=& \underset{\mathbf{\Lambda}}\max & \langle \mathbf{\Lambda,Z} \rangle  \\            && \mbox{\text{subject to}} & \|\mathbf{\Lambda}  \|_{\infty} \le 1, \Lambda_{ii} = 0 \text{ for } i=1,\ldots, p,\\
\end{array}
\end{eqnarray*}
\begin{eqnarray*}
\begin{array}{rccl}
\|\mathbf{V}-\text{diag}(\mathbf{V}) \|_1 &=& \underset{\mathbf{X}}\max & \langle \mathbf{X,V} \rangle \\            && \mbox{\text{subject to}} & \|\mathbf{X}  \|_{\infty} \le 1, X_{ii} = 0 \text{ for } i=1,\ldots, p,\\
\end{array}
\end{eqnarray*}
\begin{eqnarray*}
\begin{array}{rccl}
\|\mathbf{V}-\text{diag}(\mathbf{V}) \|_{1,q} &=& \underset{\mathbf{Y}}\max & \langle \mathbf{Y,V} \rangle  \\ 
           && \mbox{\text{subject to}} & \|\mathbf{Y}  \|_{\infty,s} \le 1, Y_{ii} = 0 \text{ for } i=1,\ldots, p.\\
\end{array}
\end{eqnarray*}

\noindent Then, 
\begin{eqnarray*}
\begin{array}{rccl}
\tilde{ \mathbf{P}}(\mathbf{\Theta}) &=& \underset{{\mathbf{V, Z}}} \min &  \| \mathbf{Z}-\text{diag}(\mathbf{Z})\|_1 + \hat{\lambda}_2 \| \mathbf{V}-\text{diag}(\mathbf{V})\|_1 + \hat{\lambda}_3 \|\mathbf{V}-\text{diag}(\mathbf{V}) \|_{1,q}\\
 && \text{subject to} & \mathbf{\Theta} = \mathbf{Z+V+V}^T\\
 &=&  \underset{{\mathbf{V, Z}}} \min & \underset{\mathbf{\Lambda,X,Y}} \max  \langle \mathbf{\Lambda,Z} \rangle + \hat{\lambda}_2\langle \mathbf{X,V} \rangle + \hat{\lambda}_3\langle \mathbf{Y,V} \rangle \\
 && \mbox{\text{subject to}} & \|\mathbf{\Lambda}\|_{\infty} \le 1, \|\mathbf{X}\|_{\infty} \le 1, \|\mathbf{Y}\|_{\infty,s}\le 1 \\
             &&			  & \Lambda_{ii}=0, X_{ii}=0, Y_{ii}=0 \text{ for } i=1,\ldots, p   \\
             &&			  & \mathbf{\Theta} = \mathbf{Z+V+V}^T\\
 &=& \underset{\mathbf{\Lambda,X,Y}} \max &  \underset{{\mathbf{V, Z}}} \min   \langle \mathbf{\Lambda,Z} \rangle + \hat{\lambda}_2 \langle \mathbf{X,V} \rangle + \hat{\lambda}_3 \langle \mathbf{Y,V} \rangle \\
 && \mbox{\text{subject to}} & \|\mathbf{\Lambda}\|_{\infty} \le 1, \|\mathbf{X}\|_{\infty} \le 1, \|\mathbf{Y}\|_{\infty,s}\le 1 \\
             &&			  & \Lambda_{ii}=0, X_{ii}=0, Y_{ii}=0 \text{ for } i=1,\ldots, p   \\
             &&			  & \mathbf{\Theta} = \mathbf{Z+V+V}^T\\
             
&=&\underset{\mathbf{\Lambda},\mathbf{X},\mathbf{Y}}\max & \langle \mathbf{\Lambda,\Theta} \rangle \\            && \mbox{\text{subject to}} & \mathbf{\Lambda} + \mathbf{\Lambda}^T = \hat{\lambda}_2 \mathbf{X} + \hat{\lambda}_3 \mathbf{Y} \\
             &&			  & \|\mathbf{X}\|_{\infty} \leq 1, \|\mathbf{\Lambda}\|_{\infty} \leq 1, \|\mathbf{Y}\|_{\infty,s} \leq 1 \\
             &&			  & {X}_{ii} = 0, {Y}_{ii} = 0, {\Lambda}_{ii} = 0 \; \text{for } i=1,\ldots,p.
\end{array}
\end{eqnarray*}

\noindent The third equality holds since the constraints on $(\mathbf{V,Z})$ and on $(\mathbf{\Lambda,X,Y})$ are both compact convex sets and so by the minimax theorem, we can swap max and min.  The last equality follows from the fact that 
\begin{eqnarray*}
\begin{array}{ccc}
\underset{\mathbf{V},\mathbf{Z}}\min & \langle \mathbf{\Lambda,Z} \rangle + \hat{\lambda}_2  \langle \mathbf{X,V} \rangle + \hat{\lambda}_3  \langle \mathbf{Y,V} \rangle \\
\mbox{subject to} & \mathbf{\Theta} = \mathbf{Z} + \mathbf{V} + \mathbf{V}^T \\
=&
\left\{ \begin{array}{cc} \langle \mathbf{\Lambda,\Theta} \rangle& \mbox{if  } \mathbf{\Lambda} + \mathbf{\Lambda}^T = \hat{\lambda}_2 \mathbf{X} + \hat{\lambda}_3 \mathbf{Y} \\
			       -\infty & \mbox{otherwise}. \end{array} \right.
\end{array}
\end{eqnarray*}
\end{proof}

We now present the proof of Theorem 2.

\begin{proof}
The optimality condition for (\ref{Equation:reduceHGL}) is given by
\begin{equation}
\label{Equation:optimalitycondition}
\mathbf{0} = -\mathbf{\Theta}^{-1} + \mathbf{S} + \lambda_1\mathbf{\Lambda},
\end{equation}
where $\mathbf{\Lambda}$ is a subgradient of $\tilde{\mathbf{P}}(\mathbf{\Theta})$ in (\ref{normequation}) and the left-hand side of the above equation is a zero matrix of size $p\times p$.

Now suppose that $\mathbf{\Theta}^*$ that solves (\ref{Equation:optimalitycondition}) is supported on $\mathcal{T}$, i.e., $\mathbf{\Theta}^*_{\mathcal{T}^c} = 0$.  Then for any $(i,j) \in \mathcal{T}^c$, we have that 
\begin{equation}
\label{Equation:optimconditionhold}
0 = S_{ij} + \lambda_1 {\Lambda}^*_{ij},
\end{equation}
where $\mathbf{\Lambda}^*$ is a subgradient of $\tilde{\mathbf{P}}(\mathbf{\Theta}^*)$.  Note that $\mathbf{\Lambda}^*$ must be an optimal solution to the optimization problem (\ref{dualrepresentation}).  Therefore, it is also a feasible solution to (\ref{dualrepresentation}), implying that 
\begin{equation*}
\begin{split}
|\Lambda_{ij}^*+\Lambda_{ji}^*| &\le \hat{\lambda}_2 + \hat{\lambda}_3,\\
|\Lambda_{ij}^*| &\le 1.
\end{split}
\end{equation*}
From (\ref{Equation:optimconditionhold}), we have that $\Lambda_{ij}^* = -\frac{S_{ij}}{\lambda_1}$ and thus,
\begin{equation*}
\begin{split}
\lambda_1 &\ge \lambda_1 \underset{(i,j)\in \mathcal{T}^c}\max |\Lambda_{ij}^*| \\
&=   \lambda_1 \underset{(i,j)\in \mathcal{T}^c}\max \frac{|S_{ij}|}{\lambda_1}\\
&= S_{\max}.
\end{split}
\end{equation*}
Also, recall that $\hat{\lambda}_2 = \frac{\lambda_2}{\lambda_1}$ and $\hat{\lambda}_3 = \frac{\lambda_3}{\lambda_1}$. We have that 
\begin{equation*}
\begin{split}
\lambda_2+\lambda_3 &\ge \lambda_1 \underset{(i,j)\in \mathcal{T}^c}\max |\Lambda_{ij}^*+\Lambda_{ji}^*| \\
&=   \lambda_1 \underset{(i,j)\in \mathcal{T}^c}\max \frac{2|S_{ij}|}{\lambda_1}\\
&= 2S_{\max}.
\end{split}
\end{equation*}
Hence, we obtain the desired result.

\end{proof}


\section*{Appendix C: Some Properties of HGL}
\subsection*{Proof of Lemma \ref{Lemma:DiagonalZ}}

\begin{proof} Let $(\mathbf{\Theta}^*,\mathbf{Z}^*,\mathbf{V}^*)$ be the solution to (\ref{Eq:ggmhub}) and suppose that $\mathbf{Z}^*$ is not a diagonal matrix.  Note that  $\mathbf{Z}^*$ is symmetric since $\mathbf{\Theta} \in \mathcal{S} \equiv \{\mathbf{\Theta}:\mathbf{\Theta}\succ 0 \text{ and } \mathbf{\Theta}= \mathbf{\Theta}^T \}$.
Let $\hat{\mathbf{Z}}=\text{diag}(\mathbf{Z}^*)$,  a matrix that contains the diagonal elements of the matrix $\mathbf{Z}^*$. Also, construct $\hat{\mathbf{V}}$ as follows, 
\begin{eqnarray*}
\hat{\mathbf{V}}_{ij} = \left\{\begin{array}{cc} \mathbf{V}^*_{ij} + \frac{\mathbf{Z}^*_{ij}}{2} & \mbox{if } i \neq j \\
				          \mathbf{V}^*_{jj} & \mbox{otherwise}. \end{array} \right.
\end{eqnarray*}			

\noindent Then, we have that $\mathbf{\Theta}^* = \hat{\mathbf{Z}} + \hat{\mathbf{V}} + \hat{\mathbf{V}}^T$. Thus, $(\mathbf{\Theta}^*,\hat{\mathbf{Z}},\hat{\mathbf{V}})$ is a feasible solution to (\ref{Eq:ggmhub}).  We now show that $(\mathbf{\Theta}^*,\hat{\mathbf{Z}},\hat{\mathbf{V}})$ has a smaller objective than $(\mathbf{\Theta}^*,\mathbf{Z}^*,\mathbf{V}^*)$ in (\ref{Eq:ggmhub}), giving us a contradiction.  Note that

\begin{eqnarray*} \label{eq:bound-1}
\begin{array}{rcl}
\lambda_1 \|\hat{\mathbf{Z}} - \text{diag}(\hat{\mathbf{Z}})\|_1 + \lambda_2 \|\hat{\mathbf{V}} - \text{diag}(\hat{\mathbf{V}})\|_1 &=& \lambda_2 \|\hat{\mathbf{V}} - \text{diag}(\hat{\mathbf{V}})\|_1 \\
											  &=& \lambda_2 \sum_{i \neq j} |\mathbf{V}^*_{ij} + \frac{\mathbf{Z}^*_{ij}}{2}| \\
								                          &\leq& \lambda_2 \|\mathbf{V}^* - \text{diag}(\mathbf{V}^*)\|_1 + \frac{\lambda_2}{2}\|\mathbf{Z}^* - \text{diag}(\mathbf{Z}^*)\|_1,
\end{array}								                         
\end{eqnarray*}
and
\begin{eqnarray*} \label{eq:bound-2}
\begin{array}{rcl}
\lambda_3 \sum_{j=1}^p \|(\hat{\mathbf{V}} - \text{diag}(\hat{\mathbf{V}}))_j\|_q &\leq& \lambda_3 \sum_{j=1}^p \|(\mathbf{V}^* - \text{diag}(\mathbf{V}^*))_j\|_q + \frac{\lambda_3}{2} \sum_{j=1}^p \|(\mathbf{Z}^* - \text{diag}(\mathbf{Z}^*))_j\|_{q} \\
							      &\leq& \lambda_3 \sum_{j=1}^p \|(\mathbf{V}^* - \text{diag}(\mathbf{V}^*))_j\|_q + \frac{\lambda_3}{2} \|\mathbf{Z}^* - \text{diag}(\mathbf{Z}^*)\|_1,
\end{array}
\end{eqnarray*}
where the last inequality follows from the fact that for any vector $\mathbf{x} \in \mathbb{R}^p$ and $q\ge1$, $\|\mathbf{x}\|_q$ is a nonincreasing function of $q$ \citep{Gentle}.

Summing up the above inequalities, we get that 
\begin{eqnarray*}
\begin{array}{rcl}
\lambda_1 \|\hat{\mathbf{Z}} - \text{diag}(\hat{\mathbf{Z}})\|_1 + \lambda_2 \|\hat{\mathbf{V}} - \text{diag}(\hat{\mathbf{V}})\|_1 + \lambda_3 \sum_{j=1}^p \|(\hat{\mathbf{V}} - \text{diag}(\hat{\mathbf{V}}))_j\|_q &\leq& \\
\frac{\lambda_2 + \lambda_3}{2}\|\mathbf{Z}^* - \text{diag}(\mathbf{Z}^*)\|_1 + \lambda_2 \|\mathbf{V}^* - \text{diag}(\mathbf{V}^*)\|_1 + \lambda_3 \sum_{j=1}^p \|(\mathbf{V}^* - \text{diag}(\mathbf{V}^*))_j\|_q   &<& \\
\lambda_1\|\mathbf{Z}^* - \text{diag}(\mathbf{Z}^*)\|_1 + \lambda_2 \|\mathbf{V}^* - \text{diag}(\mathbf{V}^*)\|_1 + \lambda_3 \sum_{j=1}^p \|(\mathbf{V}^* - \text{diag}(\mathbf{V}^*))_j\|_q,
\end{array}
\end{eqnarray*}
where the last inequality uses the assumption that $\lambda_1 > \frac{\lambda_2+\lambda_3}{2}$.  We arrive at a contradiction and therefore the result holds.
\end{proof}


\subsection*{Proof of Lemma \ref{Lemma:DiagonalV}}

\begin{proof} Let $(\mathbf{\Theta}^*,\mathbf{Z}^*,\mathbf{V}^*)$ be the solution to (\ref{Eq:ggmhub}) and suppose $\mathbf{V}^*$ is not a diagonal matrix.
Let $\hat{\mathbf{V}}=\text{diag}(\mathbf{V}^*)$, a diagonal matrix that contains the diagonal elements of $\mathbf{V}^*$. Also construct $\hat{\mathbf{Z}}$ as follows,
\begin{eqnarray*}
\hat{\mathbf{Z}}_{ij} = \left\{\begin{array}{rc} \mathbf{Z}^*_{ij} + \mathbf{V}^*_{ij} + \mathbf{V}^*_{ji} & \mbox{if } i \neq j \\
					  \mathbf{Z}^*_{ij} & \mbox{otherwise}. \end{array} \right.
\end{eqnarray*}
Then, we have that $\mathbf{\Theta}^* = \hat{\mathbf{V}} + \hat{\mathbf{V}}^T + \hat{\mathbf{Z}}$. We now show that $(\mathbf{\Theta}^*,\hat{\mathbf{Z}}, \hat{\mathbf{V}})$ has a smaller objective value
than $(\mathbf{\Theta}^*,\mathbf{Z}^*,\mathbf{V}^*)$ in (\ref{Eq:ggmhub}), giving us a contradiction.  We start by noting that 
\begin{eqnarray*} \label{eq:bound-4}
\begin{array}{rcl}
\lambda_1 \|\hat{\mathbf{Z}} - \text{diag}(\hat{\mathbf{Z}})\|_1 + \lambda_2 \|\hat{\mathbf{V}} - \text{diag}(\hat{\mathbf{V}})\|_1 &=& \lambda_1 \|\hat{\mathbf{Z}} - \text{diag}(\hat{\mathbf{Z}})\|_1 \\
											    &\leq& \lambda_1 \|\mathbf{Z}^* - \text{diag}(\mathbf{Z}^*)\|_1 + 2\lambda_1 \|\mathbf{V}^* - \text{diag}(\mathbf{V}^*)\|_1.
\end{array}
\end{eqnarray*}
By Holder's Inequality, we know that $\mathbf{x}^T \mathbf{y} \le \|\mathbf{x}\|_q \| \mathbf{y}\|_s$ where $\frac{1}{s}+\frac{1}{q}=1$ and $\mathbf{x,y} \in \mathbb{R}^{p-1}$.  Setting $\mathbf{y} = \text{sign}(\mathbf{x})$, we have that $\|\mathbf{x}\|_1 \le (p-1)^{\frac{1}{s}} \|\mathbf{x}\|_q$.  Consequently,
\begin{equation*}
 \frac{\lambda_3}{(p-1)^{\frac{1}{s}}} \|\mathbf{V}^* - \text{diag}(\mathbf{V}^*) \|_1 \le \lambda_3 \sum_{j=1}^p  \| (  \mathbf{V}^* - \text{diag}(\mathbf{V}^*)    )_j \|_q. 
\end{equation*}
Combining these results, we have that
\begin{equation*}
\begin{split}
&\lambda_1 \|\hat{\mathbf{Z}} - \text{diag}(\hat{\mathbf{Z}})\|_1 + \lambda_2 \|\hat{\mathbf{V}} - \text{diag}(\hat{\mathbf{V}})\|_1+ \lambda_3 \sum_{j=1}^p  \| (  \hat{\mathbf{V}} - \text{diag}(\hat{\mathbf{V}})    )_j \|_q \\
&\le \lambda_1 \|\mathbf{Z}^* - \text{diag}(\mathbf{Z}^*)\|_1 + 2\lambda_1 \|\mathbf{V}^* - \text{diag}(\mathbf{V}^*)\|_1\\
&< \lambda_1 \|\mathbf{Z}^* - \text{diag}(\mathbf{Z}^*)\|_1 + \left( \lambda_2+\frac{\lambda_3}{(p-1)^{\frac{1}{s}}} \right) \|\mathbf{V}^* - \text{diag}(\mathbf{V}^*)\|_1\\
&\le \lambda_1 \|\mathbf{Z}^* - \text{diag}(\mathbf{Z}^*)\|_1 + \lambda_2 \|\mathbf{V}^* - \text{diag}(\mathbf{V}^*)\|_1 + \lambda_3 \sum_{j=1}^p \|(\mathbf{V}^* - \text{diag}(\mathbf{V}^*))_j \|_q,
\end{split}
\end{equation*}
where we use the assumption that $\lambda_1 < \frac{\lambda_2}{2}+\frac{\lambda_3}{2(p-1)^{\frac{1}{s}}}$.  This leads to a contradiction.
\end{proof}


\subsection*{Proof of Lemma \ref{lemma3}}
In this proof, we consider the case when $\lambda_1 > \frac{\lambda_2+\lambda_3}{2}$. A similar proof technique can be used to prove the case when  $\lambda_1 < \frac{\lambda_2+\lambda_3}{2}$. \\

\begin{proof}  Let $f({\bf \Theta}, {\bf V}, {\bf Z})$ denote the objective of (\ref{Eq:ggmhub}) with $q=1$, and $(\mathbf{\Theta}^*, \mathbf{V}^*, \mathbf{Z}^*)$ the optimal solution.  By Lemma~\ref{Lemma:DiagonalZ}, the assumption that
$\lambda_1 > \frac{\lambda_2+\lambda_3}{2}$ implies that 
   $\mathbf{Z}^*$ is a diagonal matrix. Now let $\hat{\bf V} = \frac{1}{2} \left( {\bf V}^* + ({\bf V}^*)^T \right)$. Then 
    
\begin{eqnarray*}
f(\mathbf{\Theta}^*, \hat{\mathbf{V}}, \mathbf{Z}^*) 
&=& -\log \det \mathbf{\Theta}^* + \langle \mathbf{\Theta}^*,\mathbf{S} \rangle + \lambda_1 \| {\mathbf{Z}}^*-\text{diag}({\mathbf{Z}}^*)  \|_1 + (\lambda_2+\lambda_3) \| \hat{\mathbf{V}}-\text{diag}(\hat{\mathbf{V}}) \|_1  \nonumber     \\
&=&-\log \det \mathbf{\Theta}^* + \langle \mathbf{\Theta}^*,\mathbf{S} \rangle + \frac{\lambda_2+\lambda_3}{2} \| {\mathbf{V}}^*+ {{\mathbf{V}}^*}^T-\text{diag}({\mathbf{V}}^*+{{\mathbf{V}}^*}^T) \|_1  \nonumber \\
& \leq& -\log \det \mathbf{\Theta}^* + \langle \mathbf{\Theta}^*,\mathbf{S} \rangle + ({\lambda_2+\lambda_3}) \| {\mathbf{V}}^*-\text{diag}({\mathbf{V}}^*) \|_1 \nonumber \\
& = &  f(\mathbf{\Theta}^*, {\mathbf{V}}^*, \mathbf{Z}^*) \nonumber \\ 
& \leq & f(\mathbf{\Theta}^*, \hat{\mathbf{V}}, \mathbf{Z}^*),
\end{eqnarray*}
where the last inequality follows from the assumption that ($\mathbf{\Theta}^*, {\mathbf{V}}^*, \mathbf{Z}^*$) solves (\ref{Eq:ggmhub}).  By strict convexity of $f$, this means that ${\bf V}^* = \hat{\bf V}$, i.e., $\mathbf{V}^*$ is symmetric. 
This implies that 
\begin{eqnarray} 
\label{my.eq}
f(\mathbf{\Theta}^*, {\mathbf{V}}^*, \mathbf{Z}^*) 
&=&-\log \det \mathbf{\Theta}^* + \langle \mathbf{\Theta}^*,\mathbf{S} \rangle + \frac{\lambda_2+\lambda_3}{2} \| {\mathbf{V}}^*+ {{\mathbf{V}}^*}^T-\text{diag}({\mathbf{V}}^*+{{\mathbf{V}}^*}^T) \|_1  \nonumber \\
&=&-\log \det \mathbf{\Theta}^* +\langle \mathbf{\Theta}^*,\mathbf{S} \rangle +\frac{\lambda_2+\lambda_3}{2} \| \mathbf{\Theta}^* - \text{diag}(\mathbf{\Theta}^*)\|_1 \\
&=& g({\bf \Theta}^*),\nonumber
\end{eqnarray}
where $g({\bf \Theta})$ is the objective of the  graphical lasso optimization problem, evaluated at $\bf \Theta$, with tuning parameter $\frac{\lambda_2+\lambda_3}{2}$.
Suppose that $\tilde{\bf \Theta}$ minimizes $g({\bf \Theta})$, and ${\bf \Theta}^* \neq \tilde{\bf \Theta}$.  
Then, by (\ref{my.eq}) and strict convexity of $g$, $g(\mathbf{\Theta}^*) = f(\mathbf{\Theta}^*,{\mathbf{V}}^*,{\mathbf{Z}}^*) \le f(\tilde{\mathbf{\Theta}},\tilde{\mathbf{\Theta}}/2,\mathbf{0}) = g(\tilde{\mathbf{\Theta}}) < g(\mathbf{\Theta}^*)$, giving us a contradiction.  Thus it must be that $\tilde{\mathbf{\Theta}} = \mathbf{\Theta}^*$.

\end{proof}

\section*{Appendix D: Simulation Study for Hub Covariance Graph }
In this section, we present the results for the simulation study described in Section~\ref{Cov:simulation} with $n=100$, $p=200$, and $|\mathcal{H}| = 4$. We calculate the proportion of correctly estimated hub nodes with $r=40$.  The results are shown in Figure~\ref{Fig:CovSimsmall}.  As we can see from Figure~\ref{Fig:CovSimsmall}, our proposal outperforms \citet{BienTibs11}.  In particular, we can see from Figure~\ref{Fig:CovSimsmall}(c) that \citet{BienTibs11} fails to identify hub nodes.  
\begin{figure}[htp]
\begin{center}
\includegraphics[scale=0.51]{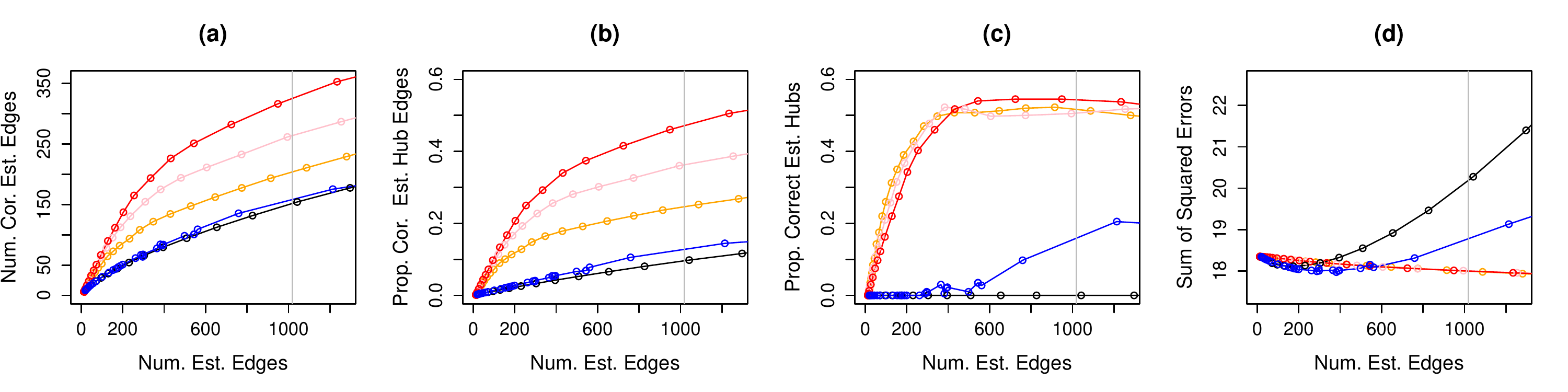}
 \end{center}
  \caption{Covariance graph simulation with $n=100$ and $p=200$.  Details of the axis labels are as in Figure~\ref{Fig:simulation1}.  The colored lines correspond to the proposal of {\protect\citet{Xueetal2012}} (\protect\includegraphics[height=0.5em]{black.png}); HCG with $\lambda_3=1$ (\protect\includegraphics[height=0.5em]{orange.png}), $\lambda_3=1.5$ (\protect\includegraphics[height=0.5em]{pink.png}), and  $\lambda_3=2$ (\protect\includegraphics[height=0.5em]{red.png});  and the proposal of {\protect\citet{BienTibs11}} (\protect\includegraphics[height=0.5em]{blue.png}). }
  \label{Fig:CovSimsmall}
\end{figure}

\section*{Appendix E: Run Time Study for the ADMM algorithm for HGL}
In this section, we present a more extensive run time study for the ADMM algorithm for HGL.  We ran experiments with $p=100,200,300$ and with $n=p/2$ on a 2.26GHz Intel Core 2 Duo machine.  Results averaged over 10 replications are displayed in Figures~\ref{fig:timing}(a)-(b), where the panels depict the run time and number of iterations required for the algorithm to converge, as a function of $\lambda_1$, with $\lambda_2=0.5$ and $\lambda_3=2$ fixed.  The number of iterations required for the algorithm to converge is computed as the total number of iterations in Step 2 of Algorithm~\ref{Alg:general}.  We see from Figure~\ref{fig:timing}(a) that as $p$ increases from 100 to 300, the run times increase substantially, but never exceed several minutes.  Note that these results are without using the block diagonal condition in Theorem 1.

\begin{figure}[htp]
\begin{center}
\hspace{10mm} (a) \hspace{70mm} (b)
\includegraphics[scale=0.36]{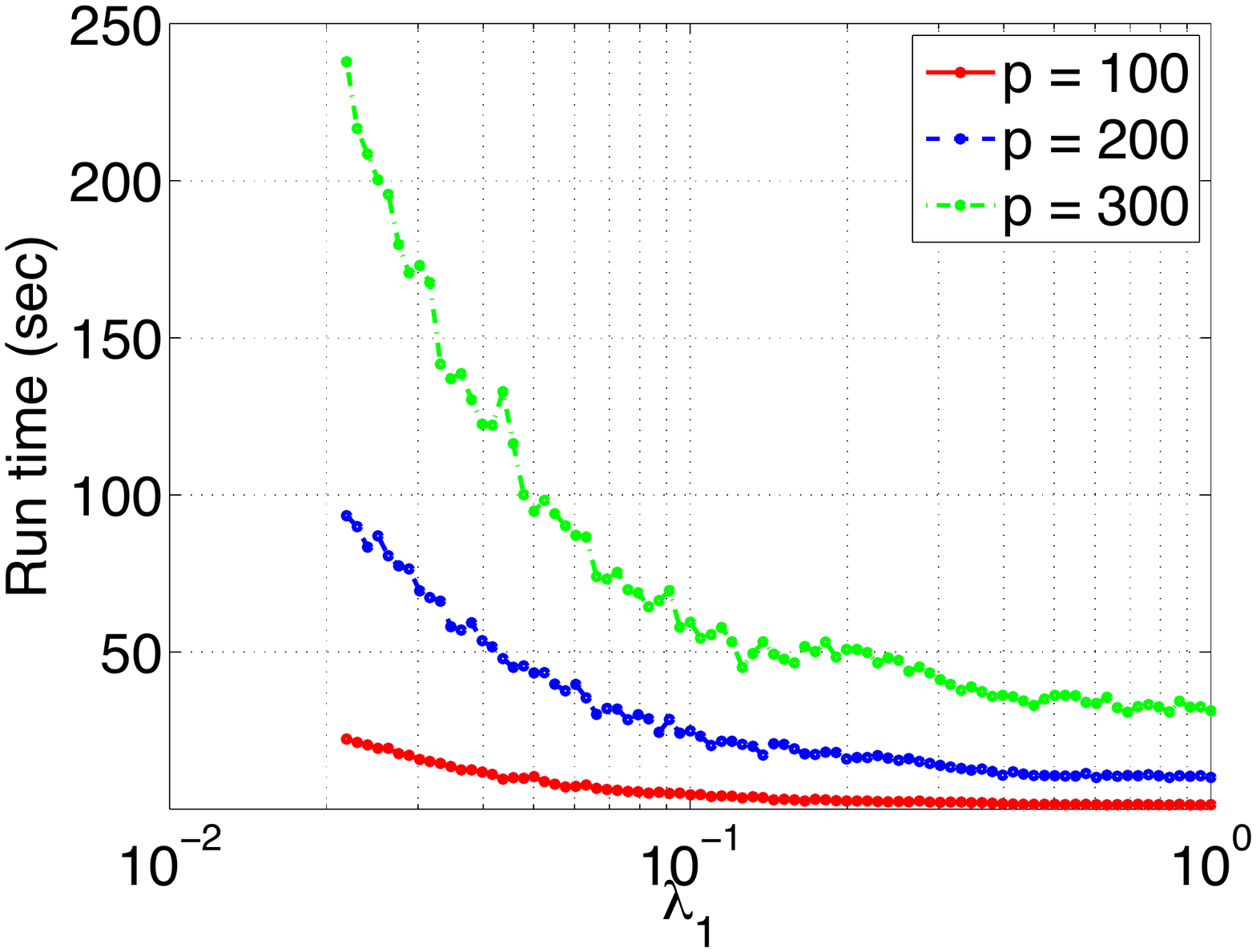}
\includegraphics[scale=0.36]{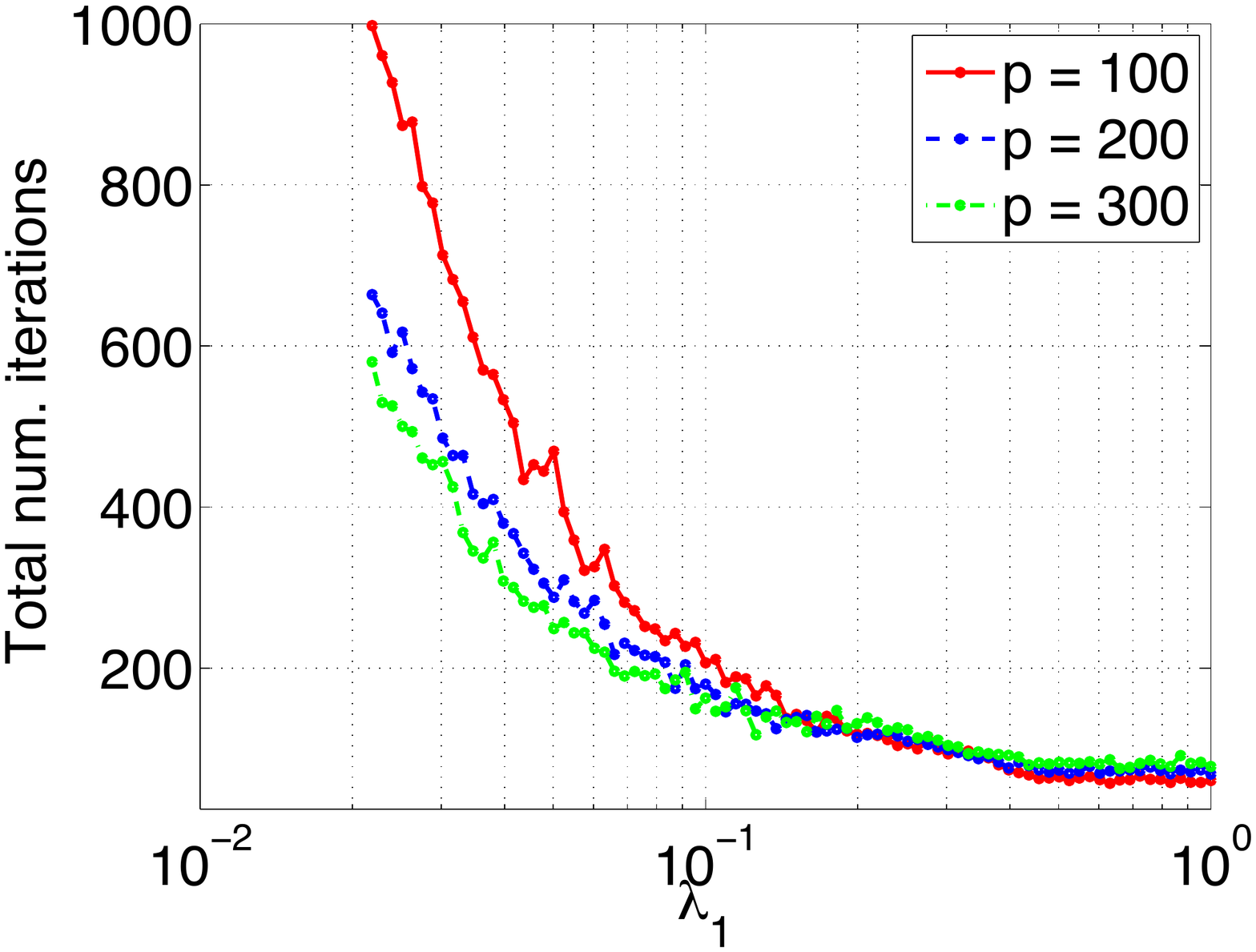}
\end{center}
\caption{(a): Run time (in seconds) of the ADMM algorithm for HGL, as a function of $\lambda_1$, for fixed values of $\lambda_2$ and $\lambda_3$.  (b): The total number of iterations required for the ADMM algorithm for HGL to converge, as a function of $\lambda_1$.  All results are averaged over 10 simulated data sets. These results are without using the block diagonal condition in Theorem 1.}
\label{fig:timing}
\end{figure}

\section*{Appendix F: Update for $\mathbf{\Theta}$ in Step 2(a)i for Binary Ising Model using Barzilai-Borwein Method}
We consider updating $\mathbf{\Theta}$ in Step 2(a)i of Algorithm~\ref{Alg:general} for binary Ising  model. 
 Let 
\[
h(\mathbf{\Theta}) = \left\{  -\sum_{j=1}^p \sum_{j'=1}^p \theta_{jj'}  (\mathbf{X}^T\mathbf{X})_{jj'} +\sum_{i=1}^p \sum_{j=1}^p \log \left(1+ \exp\left[ \theta_{jj}+\sum_{j'\ne j} \theta_{jj'} x_{ij'} \right] \right) +\frac{\rho}{2} \|\mathbf{\Theta}-\tilde{\mathbf{\Theta}}+\mathbf{W}_1\|_F^2   \right\}.
\]
Then, the optimization problem for Step 2(a)i of Algorithm~\ref{Alg:general} is 
\begin{equation}
\label{Eq:isingalg}
\underset{\mathbf{\Theta} \in \mathcal{S}}{\text{minimize}} \quad h(\mathbf{\Theta}),
\end{equation}
where  $\mathcal{S}=\{\mathbf{\Theta}: \mathbf{\Theta}=\mathbf{\Theta}^T\}$.  In solving (\ref{Eq:isingalg}), we will treat $\mathbf{\Theta} \in \mathcal{S}$ as an implicit constraint.

The Barzilai-Borwein method is a gradient descent method with the step-size chosen to mimic the secant condition of the BFGS method \citep[see, e.g.,][]{barzilai1988two,nocedal2006numerical}.  The convergence of the Barzilai-Borwein method for unconstrained minimization using a non-monotone line search was shown in \citet{raydan1997barzilai}.  Recent convergence results for  a quadratic cost function can be found in \citet{dai2013new}. To implement the Barzilai-Borwein method, we need to evaluate the gradient of $h(\mathbf{\Theta})$.  Let $\nabla h(\mathbf{\Theta})$ be a $p\times p$ matrix, where the $(j,j')$ entry is the gradient of $h(\mathbf{\Theta})$ with respect to $\theta_{jj'}$, computed under the constraint $\mathbf{\Theta} \in \mathcal{S}$, that is,  $\theta_{jj'}=\theta_{j'j}$. Then, 
\[
(\nabla  h(\mathbf{\Theta}))_{jj} =-(\mathbf{X}^T\mathbf{X})_{jj} + \sum_{i=1}^n \left[ \frac{\exp(\theta_{jj} + \sum_{j'\ne j} \theta_{jj'} x_{ij'})}{1+\exp(\theta_{jj} + \sum_{j'\ne j} \theta_{jj'} x_{ij'})}\right] + \rho (\theta_{jj} - \tilde{\theta}_{jj} + (\mathbf{W}_1)_{jj}),  
\]
and 
\begin{equation*}
\begin{split}
(\nabla h(\mathbf{\Theta}))_{jj'} &= -2(\mathbf{X}^T\mathbf{X})_{jj} + 2\rho  (\theta_{jj'} - \tilde{\theta}_{jj'} + (\mathbf{W}_1)_{jj'})  \\
&\qquad + \sum_{i=1}^n \left[ \frac{ x_{ij'}\exp(\theta_{jj} + \sum_{j'\ne j} \theta_{jj'} x_{ij'})}{1+\exp(\theta_{jj} + \sum_{j'\ne j} \theta_{jj'} x_{ij'})} + \frac{ x_{ij}\exp(\theta_{j'j'} + \sum_{j\ne j'} \theta_{jj'} x_{ij})}{1+\exp(\theta_{j'j'} + \sum_{j\ne j'} \theta_{jj'} x_{ij})} \right].
\end{split}
\end{equation*}

A simple implementation of the Barzilai-Borwein algorithm for solving (\ref{Eq:isingalg}) is detailed in Algorithm~\ref{Alg:bb}.  We note that the Barzilai-Borwein algorithm can be improved \citep[see, e.g.,][]{barzilai1988two,wright2009sparse}. We leave such improvement for future work.

\begin{algorithm}[htp]
\small
\caption{Barzilai-Borwein Algorithm for Solving (\ref{Eq:isingalg}).}
\label{Alg:bb}

\begin{enumerate}
\item  \textbf{Initialize} the parameters: 
\begin{enumerate}
\item $\mathbf{\Theta}_1= \mathbf{I}$ and $\mathbf{\Theta}_{0}= 2\mathbf{I}$. 
\item constant $\tau>0$.
\end{enumerate}
 
\item  \textbf{Iterate} until the stopping criterion $\frac{\| {\mathbf{\Theta}}_t- {\mathbf{\Theta}}_{t-1} \|_F^2}{\| {\mathbf{\Theta}}_{t-1}\|_F^2} \le \tau$ is met, where $\mathbf{\Theta}_t$ is the value of $\mathbf{\Theta}$ obtained at the $t$th iteration:
\begin{enumerate}
\item $\alpha_t= \text{trace}\left[ (\mathbf{\Theta}_t-\mathbf{\Theta}_{t-1})^T (\mathbf{\Theta}_t-\mathbf{\Theta}_{t-1})\right] / \text{trace}\left[ (\mathbf{\Theta}_t-\mathbf{\Theta}_{t-1})^T (\nabla h(\mathbf{\Theta}_t) - \nabla h(\mathbf{\Theta}_{t-1}))\right]$.
\item $\mathbf{\Theta}_{t+1} = \mathbf{\Theta}_t - \alpha_t \nabla h(\mathbf{\Theta}_t)$.
\end{enumerate}

\end{enumerate}
\end{algorithm}

\newpage
\bibliographystyle{natbib}
\bibliography{reference}

\end{document}